\begin{document}

\title{A Bi-level Nonlinear Eigenvector Algorithm for\\
Wasserstein Discriminant Analysis}
\author{Dong Min Roh\thanks{Department of Mathematics, University of California, Davis, CA 95616, USA, \href{droh@ucdavis.edu}{droh@ucdavis.edu}} 
\and Zhaojun Bai\thanks{Department of Computer Science, University of California, Davis, CA 95616, USA, \href{zbai@ucdavis.edu}{zbai@ucdavis.edu}}
\and Ren-Cang Li\thanks{Department of Mathematics, University of Texas at Arlington, Arlington, TX 76019, USA, \href{rcli@uta.edu}{rcli@uta.edu}}
}

 
\date{}
 
\maketitle

\interfootnotelinepenalty=100000000


\begin{abstract}
Much like the classical Fisher linear discriminant analysis (LDA), the recently proposed Wasserstein discriminant analysis (WDA) is a linear dimensionality reduction method that seeks a projection matrix to maximize the dispersion of different data classes and minimize the dispersion of same data classes via a bi-level optimization. In contrast to LDA, WDA can account for both global and local interconnections between data classes by using the underlying principles of optimal transport. 
In this paper, a bi-level nonlinear eigenvector algorithm (WDA-nepv) is presented to fully exploit the structures of the bi-level optimization of WDA.
The inner level of WDA-nepv for computing the optimal transport matrices is formulated as an eigenvector-dependent nonlinear eigenvalue problem (NEPv), and meanwhile, the outer level for trace ratio optimizations is formulated as another NEPv.
Both NEPvs can be computed efficiently under the self-consistent field (SCF) framework.
WDA-nepv is derivative-free and surrogate-model-free when compared with existing algorithms. 
Convergence analysis of the proposed WDA-nepv justifies the utilization of the SCF for solving the bi-level optimization of WDA.
Numerical experiments with synthetic and real-life datasets demonstrate the classification accuracy and scalability of WDA-nepv.
\end{abstract}


\section{Introduction}
As widely used feature extraction approaches in machine learning, dimensionality reduction (DR) methods \cite{van2009dimensionality,burges2010dimension,fodor2002survey,cunningham2015linear} learn projections such that the projected lower dimensional subspaces maintain the coherent structure of datasets and reduce computational costs of classification or clustering. The linear projection obtained from linear DR methods takes the form of a matrix such that the embedding to the lower dimensional subspace only involves matrix multiplications. Due to such ease in interpretation and implementation, linear DR methods are often the favored choice among numerous DR methods. For example, principal component analysis (PCA) \cite{fukunaga2013introduction} seeks to find a linear projection that preserves the dataset's variation and is one of the most common and well-known DR methods. Other well-known DR methods include Fisher linear discriminant analysis (LDA) \cite{fukunaga2013introduction} to take into account the information of classes and compute a linear projection that best separates different classes, and Mahalanobis metric learning \cite{kulis2013metric} to seek a distance metric that better models the relationship among dataset from a linear projection.

Wasserstein discriminant analysis (WDA) \cite{flamary2018wasserstein} is a supervised linear DR that is based on the use of regularized Wasserstein distances \cite{cuturi2013sinkhorn} as a distance metric. 
Similar to Fisher linear discriminant analysis (LDA), WDA seeks  a projection matrix to maximize the dispersion of projected points between different classes and minimize the dispersion of projected points within same classes. An important distinction between LDA and WDA is that while LDA only considers the global relations between data points \cite{cai2007locality,fan2011local,nie2007neighborhood,weinberger2009distance}, WDA can dynamically take into account both global and local information through the choice of a regularization parameter.
A recent study has shown that WDA outperforms other linear DR methods in various learning tasks, such as sequence pattern analysis \cite{su2017order,su2019order}, graph classification \cite{zhang2021deep}, and multi-view classification \cite{kasai2020multi}.

WDA is formulated as a bi-level nonlinear trace ratio optimization \cite{flamary2018wasserstein}. The inner optimization computes the optimal transport (OT) matrices of the regularized Wasserstein distance, an essential factor responsible for WDA's ability to account for both global and local relations. The outer optimization involves a nonlinear trace ratio for the dispersion of data classes.

\paragraph{Related work.} 
The algorithmic aspects of WDA are the focus of this paper.
To the best of our knowledge, there are two algorithms specifically designed for WDA. 
One approach is to use a projected gradient descent method \cite{flamary2018wasserstein}. 
This approach requires the computation of derivatives of a highly nonlinear objective function at a cost that grows quadratically in the dimension size and the number of data points. 
Another work considers a surrogate ratio trace model of WDA \cite{liu2020ratio}. This approach lacks the quantification of the approximation error of the surrogate model. It is well-known that even in the case of the conventional trace ratio optimization with constant matrices, the surrogate ratio trace model could be a poor approximation \cite{wang2007trace}.

\paragraph{Contributions.}
To overcome the theoretical and practical shortcomings of existing algorithms, we propose a new algorithm called WDA-nepv. 
WDA-nepv is a derivative-free and surrogate-model-free algorithm that takes full advantage of the bi-level structure of WDA. Specifically, we formulate the computation of the OT matrices in the inner optimization and the outer trace ratio optimization as two eigenvector-dependent nonlinear eigenvalue problems (NEPvs).

We propose to solve both NEPvs by the self-consistent field (SCF) iteration. 
Fast eigensolvers such as implicitly restarted Arnoldi method \cite{sorensen1997implicitly} enables efficient implementation of SCF. 
In addition, we propose an efficient way to compute the cross-covariance matrices of WDA. Specifically, we convert multiple calls to level-2 BLAS (Basic Linear Algebra Subprograms) to level-3 BLAS, which leads to a significant reduction in computing time. 
We provide a convergence analysis for the SCF framework of WDA-nepv. 
Numerically, we demonstrate linear convergence of WDA-nepv. Moreover, using real-life datasets, we demonstrate that the classification accuracy of WDA-nepv is comparable to or outperforms the other existing algorithms. We also present scalability of WDA-nepv algorithm, demonstrating its computational efficiency. 

We emphasize that the primary purpose of this work is not to demonstrate the advantages of WDA compared with other DR methods. For an in-depth discussion of the relation of WDA with other discriminant analysis and the comparison of its classification accuracy with other DR methods, the reader is referred to \cite{flamary2018wasserstein,liu2020ratio,kasai2020multi,su2019order}.

\paragraph{Paper outline.}
In Section 2, Wasserstein distance and regularized Wasserstein distance are discussed. The setup of WDA and its formulation as a nonlinear trace ratio optimization is presented in Section 3. We give a summary of existing algorithms for WDA and point out their limitations and shortcomings in Section 4. In Section 5, we propose a bi-level algorithm that involves solving NEPv for OT matrices and trace ratio optimizations and provide convergence analysis for the algorithm. Numerical experiments on convergence, classification, and scalability of our algorithm are provided in Section 6. Finally, we conclude our paper in Section 7 and discuss our future plans of study.

\paragraph{Notation.}  
$\RR^{n\times m}$ and $\RR^n$ denote the sets of all $n\times m$ real matrices and all real vectors of size $n$, respectively. 
$\RR_+^n$ and $\RR_+^{n\times m}$ represent the sets of vectors and matrices, respectively, whose components are non-negative.
$\langle\bbf{A},\bbf{B}\rangle:=\tr(\bbf{A}^T\bbf{B}) 
= \sum_{ij} A_{ij}B_{ij}$ represents the 
Frobenius inner product between matrices $\bbf{A},\bbf{B}\in\RR^{n\times m}$.
$\bbf{1}_d\in\RR^d$ denotes the vector of all ones and $I_p\in\RR^{p\times p}$ denotes the identity matrix. 
$\mathcal{D}(\bbf{a})\in\RR^{d\times d}$ denotes a diagonal matrix whose diagonal 
is the vector $\bbf{a}\in\RR^d$.
The symbol $./$ denotes element-wise division between vectors of the same dimension.

\paragraph{Acronyms.}
Accelerated Sinkhorn-Knopp (Acc-SK); Basic Linear Algebra Subprograms (BLAS); dimensionality reduction (DR); linear discriminant analysis (LDA); eigenvector-dependent nonlinear eigenvalue problem (NEPv); optimal transport (OT); principal component analysis (PCA); self-consistent field (SCF); Sinkhorn-Knopp (SK); trace ratio optimization (TRopt); Wasserstein discriminant analysis (WDA);


\section{Wasserstein Distances}\label{sec:dist}
We first present the background on Wasserstein distance \cite{villani2009optimal, cuturi2013sinkhorn}, also known as Earth mover's distance \cite{rubner1997earth}, and its regularized variant known as regularized Wasserstein distance \cite{flamary2018wasserstein}. By principles from optimal transport theory, these distances define a geometry over the space of probability enabling the ability to compare distances between probability distributions. Following the introduction of these distances, we discuss the cost of their computations and conclude the section with a discussion on the applications of these distances.

\paragraph{Transport polytope.}
Given two probability vectors $\bbf{r}\in\RR_+^n$ 
and $\bbf{c}\in\RR_+^m$, i.e., $\bbf{r}^T\bbf{1}_n=1$ and $\bbf{c}^T\bbf{1}_m=1$, the following set of positive matrices
\begin{equation}\label{eq:Upoly}
U(\bbf{r},\bbf{c}):=\bigg\{\bbf{T} \mid \bbf{T}\in\RR_+^{n\times m}, \, \bbf{T}\bbf{1}_m=\bbf{r},\, \bbf{T}^T\bbf{1}_n=\bbf{c}\bigg\}
\end{equation}
is called the {\em transport polytope} of $\bbf{r}$ and $\bbf{c}$. 
In analogy, if we consider each component of $\bbf{r}$ as a pile of sand and each component of $\bbf{c}$ as a hole to be filled in, the component $T_{ij}$ of $\bbf{T}$ represents the amount of sand from the pile $r_i$ that gets moved to the hole $c_j$.
For this reason, a matrix $\bbf{T}\in U(\bbf{r},\bbf{c})$ is referred to as a {\em transport matrix} of $\bbf{r}$ and $\bbf{c}$.

\paragraph{Wasserstein distance.}
Let $\bbf{M}\in\RR^{n\times m}$ be a non-negative matrix whose elements $M_{ij}\geq0$
represents the unit cost of transporting $r_i$ to $c_j$. Then, the cost of 
transporting $\bbf{r}$ to $\bbf{c}$ using a transport matrix $\bbf{T}$ and a cost matrix $\bbf{M}$
is quantified as 
\begin{equation}\label{eq:cost_TM}
\langle\bbf{T},\bbf{M}\rangle=\sum_{ij}T_{ij}M_{ij}.
\end{equation}
Given the matrix $\bbf{M}$,
the optimization problem, referred to as the {\em optimal transport problem} \cite{villani2009optimal,cuturi2013sinkhorn},
\begin{equation}\label{eq:Was_dist}
W(\bbf{r},\bbf{c})
:=\min_{\bbf{T}\in U(\bbf{r},\bbf{c})}\langle\bbf{T},\bbf{M}\rangle
\end{equation}
seeks a transport matrix $\bbf{T}$ that minimizes the transport cost between $\bbf{r}$ and $\bbf{c}$.
When $\bbf{M}$ is a metric matrix, i.e., $\bbf{M}$ belongs in the cone
$$\mathcal{M}=\bigg\{\bbf{M} \mid \bbf{M}\in\RR_+^{n\times m},\, M_{ij}=0 \Leftrightarrow i=j,\mbox{ and } M_{ij}\leq M_{ik}+M_{kj},\, \forall i,j,k\bigg\},$$
then $W(\bbf{r},\bbf{c})$ forms a distance
between $\bbf{r}$ and $\bbf{c}$ \cite{villani2009optimal}, and  
is called the \textit{optimal transport distance} or the \textit{Wasserstein distance}\footnote{Strictly speaking, the correct terminology of $W(\bbf{r},\bbf{c})$~\eqref{eq:Was_dist} is the \textit{1st Wasserstein distance with discrete measure} \cite{villani2009optimal,peyre2019computational}. To be consistent with the terminology in Wasserstein Discriminant Analysis \cite{flamary2018wasserstein}, we will refer to it as the Wasserstein distance and its regularized counterpart as the regularized Wasserstein distance.} between $\bbf{r}$ and $\bbf{c}$.


\paragraph{Regularized Wasserstein distance.}
Computing the Wasserstein distance, however, is subject to heavy costs that scale in super-cubic with respect to the size of the probability vector \cite{pele2009fast}. To reduce such heavy costs, Cuturi \cite{cuturi2013sinkhorn} proposed an entropic constraint on the transport matrix that not only lowers the computation cost of the problem but also smooths the search space. Namely, a convex subset of $U(\bbf{r},\bbf{c})$ was introduced:
\begin{equation}\label{eq:Upoly_alpha}
    U_{\alpha}(\bbf{r},\bbf{c}):=\bigg\{\bbf{T} \mid \bbf{T} \in U(\bbf{r},\bbf{c})\,\,\mbox{and}\,\, h(\bbf{T})\geq h(\bbf{r})+h(\bbf{c})-\alpha\bigg\},
\end{equation}
where $\alpha\geq0$ and $h(\bbf{r})$ and $h(\bbf{T})$ are the entropy of a probability vector and the entropy of a transport matrix, respectively, defined as
\begin{equation}\label{eq:entropy}
h(\bbf{r})=-\sum_{i=1}^nr_i\log r_i
\quad \mbox{and} \quad
h(\bbf{T})=-\sum_{ij}T_{ij}\log T_{ij}.
\end{equation}
Then, Cuturi \cite{cuturi2013sinkhorn} defined the following distance between $\bbf{r}$ and $\bbf{c}$
\begin{equation}\label{eq:Sink_dist}
    W_\alpha(\bbf{r},\bbf{c}):=\min_{\bbf{T}\in U_\alpha(\bbf{r},\bbf{c})}\langle\bbf{T},\bbf{M}\rangle 
\end{equation}
as the \textit{Sinkhorn distance}, such naming due to its solvability by the Sinkhorn-Knopp algorithm.

By considering the Lagrange multiplier for the entropy constraint of the Sinkhorn distance, the optimal solution to~\eqref{eq:Sink_dist}, denoted as $\bbf{T}^\lambda$, can be computed as the solution to the problem \begin{equation}\label{eq:Sink_dist_dual}
\bbf{T}^\lambda = \argmin_{\bbf{T}\in U(\bbf{r},\bbf{c})}\, \Big\{ \langle\bbf{T},\bbf{M}\rangle-\frac{1}{\lambda} h(\bbf{T}) \Big\}.
\end{equation} 
Consistent with the terminology in \cite{flamary2018wasserstein}, we refer to $\bbf{T}^\lambda$ as the \textit{optimal transport (OT) matrix} with $\lambda$ as the \textit{regularization parameter} and
\begin{equation}\label{eq:reg_Was_dist}
W_\lambda(\bbf{r},\bbf{c}):=\langle\bbf{T}^\lambda,\bbf{M}\rangle
\end{equation}
as the \textit{regularized Wasserstein distance} (also known as dual-Sinkhorn divergence \cite{cuturi2013sinkhorn}) between the probability vectors $\bbf{r}$ and $\bbf{c}$. By duality theory, to each $\alpha$ in Sinkhorn distance~\eqref{eq:Sink_dist} corresponds a $\lambda\in[0,\infty]$ in regularized Wasserstein distance~\eqref{eq:reg_Was_dist} such that 
$$
W_\alpha(\bbf{r},\bbf{c})=W_\lambda(\bbf{r},\bbf{c}).
$$
Moreover, as the regularized parameter $\lambda\to\infty$, the regularized Wasserstein distance $W_\lambda(\bbf{r},\bbf{c})$ approaches the Wasserstein distance $W(\bbf{r},\bbf{c})$ \cite{flamary2018wasserstein}. 

\paragraph{Structure of the OT matrix $\bbf{T}^\lambda$.}
The entropy smoothed optimal transport problem~\eqref{eq:Sink_dist_dual} looks to minimize the total transport cost while maximizing the entropy of the transport matrix. Both of these terms are convex which makes the problem~\eqref{eq:Sink_dist_dual} a convex problem. Thus, the OT matrix $\bbf{T}^\lambda$ exists and is unique \cite{cuturi2013sinkhorn}. Furthermore, the OT matrix $\bbf{T}^\lambda$ admits a simple structure based on the first order analysis and Sinkhorn's Theorem~\cite{sinkhorn1967diagonal}.
\begin{theorem}\label{thm:Tlambda}
For $\lambda>0$, the solution $\bbf{T}^\lambda$ to 
the entropy smoothed optimal transport problem~\eqref{eq:Sink_dist_dual} is unique and has the form
\begin{equation}\label{eq:T}
    \bbf{T}^\lambda=\mathcal{D}(\bbf{u})\bbf{K}\mathcal{D}(\bbf{v}),
\end{equation}
where $\bbf{u}\in\RR_+^n$, $\bbf{v}\in\RR_+^m$, and 
$\bbf{K}:=e^{-\lambda\bbf{M}}$ is the element-wise exponential of $-\lambda\bbf{M}$. $\bbf{u}$ and $\bbf{v}$ are uniquely 
defined up to a multiplication factor 
\footnote{Given solutions $\bbf{u}$ and $\bbf{v}$ to \eqref{eq:T}, for any positive nonzero scalar $\alpha$, $\alpha\bbf{u}$ and $\frac{1}{\alpha}\bbf{v}$ are also solutions to \eqref{eq:T}.} 
by the relation 
\begin{eqnarray} \label{eq:uvdef} 
\begin{cases} 
\bbf{u} =\bbf{r}./(\bbf{K}\bbf{v})  \\ 
\bbf{v} =\bbf{c}./(\bbf{K}^T\bbf{u}).
\end{cases} 
\end{eqnarray} 
\end{theorem}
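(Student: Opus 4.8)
The plan is to settle uniqueness of $\bbf{T}^\lambda$ by a strict-convexity argument, derive the multiplicative form by a first-order (KKT) analysis after checking that the nonnegativity constraints are inactive, and then read off the fixed-point system~\eqref{eq:uvdef} by substituting back into the marginal constraints, deferring the ``uniqueness up to scaling'' of $\bbf{u},\bbf{v}$ to Sinkhorn's theorem. First I would record that the feasible set $U(\bbf{r},\bbf{c})$ is a nonempty, compact, convex polytope and that the objective $\langle\bbf{T},\bbf{M}\rangle-\tfrac1\lambda h(\bbf{T})=\langle\bbf{T},\bbf{M}\rangle+\tfrac1\lambda\sum_{ij}T_{ij}\log T_{ij}$ is continuous on it (with the usual convention $0\log 0=0$) and \emph{strictly} convex, since $t\mapsto t\log t$ is strictly convex and adding a linear term does not destroy strict convexity. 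Existence of a minimizer follows from compactness, and strict convexity forces it to be unique; this disposes of the uniqueness claim for $\bbf{T}^\lambda$.

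Next, for the form~\eqref{eq:T}, I would first argue that the minimizer has strictly positive entries whenever $\bbf{r}$ and $\bbf{c}$ have strictly positive components, so that the inequality constraints $T_{ij}\ge 0$ are inactive. The point is that the one-sided derivative of $t\mapsto t\log t$ tends to $-\infty$ as $t\to 0^+$: if some $T^\lambda_{ij}=0$, one can shift an infinitesimal amount of mass onto entry $(i,j)$ while preserving the row and column sums (there is mass available to move since $\bbf{r},\bbf{c}>0$) and strictly decrease the objective, contradicting optimality. With the minimizer in the relative interior, stationarity of the Lagrangian for the equality constraints $\bbf{T}\bbf{1}_m=\bbf{r}$, $\bbf{T}^T\bbf{1}_n=\bbf{c}$, with multipliers $\boldsymbol{\alpha}\in\RR^n$ and $\boldsymbol{\beta}\in\RR^m$, gives the entrywise condition
$$
M_{ij}+\tfrac1\lambda\big(\log T^\lambda_{ij}+1\big)-\alpha_i-\beta_j=0,
$$
so that $T^\lambda_{ij}=u_i\,e^{-\lambda M_{ij}}\,v_j$ with $u_i:=e^{\lambda\alpha_i-1/2}>0$ and $v_j:=e^{\lambda\beta_j-1/2}>0$; that is, $\bbf{T}^\lambda=\mathcal{D}(\bbf{u})\bbf{K}\mathcal{D}(\bbf{v})$ with $\bbf{K}=e^{-\lambda\bbf{M}}$. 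Substituting this expression into the two marginal constraints yields $u_i(\bbf{K}\bbf{v})_i=r_i$ and $v_j(\bbf{K}^T\bbf{u})_j=c_j$, which is exactly the coupled system~\eqref{eq:uvdef}.

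Finally, for uniqueness of $\bbf{u},\bbf{v}$ up to a reciprocal scaling, I would invoke Sinkhorn's theorem~\cite{sinkhorn1967diagonal}: since $\bbf{K}=e^{-\lambda\bbf{M}}$ is entrywise strictly positive and $\bbf{r},\bbf{c}$ are positive probability vectors, there is a unique pair of positive diagonal scalings producing the prescribed marginals, unique precisely modulo $(\bbf{u},\bbf{v})\mapsto(s\bbf{u},s^{-1}\bbf{v})$, and this scaling is the fixed point of~\eqref{eq:uvdef}. The main obstacle I anticipate is the interiority/constraint-qualification step: one must rule out vanishing entries in $\bbf{T}^\lambda$ before the clean Lagrangian stationarity condition is available, and one must keep track of the standing assumption that $\bbf{r}$ and $\bbf{c}$ have no zero components (otherwise whole rows or columns of $\bbf{T}^\lambda$ vanish and the scaling representation degenerates). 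Once these points are handled, the remaining algebra is routine, and the uniqueness of the scaling vectors piggybacks on the classical Sinkhorn result rather than being re-derived from scratch.
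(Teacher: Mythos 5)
Your proposal is correct. Note that the paper itself offers no proof of this theorem: it simply cites \cite[Lemma 2]{cuturi2013sinkhorn}, and your argument is a faithful self-contained reconstruction of exactly what that reference does --- strict convexity of $\langle\bbf{T},\bbf{M}\rangle-\tfrac1\lambda h(\bbf{T})$ on the compact convex polytope $U(\bbf{r},\bbf{c})$ for existence and uniqueness, an interiority argument so the Lagrangian stationarity condition applies, the entrywise first-order condition yielding $T^\lambda_{ij}=u_i e^{-\lambda M_{ij}}v_j$, substitution into the marginals for~\eqref{eq:uvdef}, and Sinkhorn's theorem for uniqueness of $(\bbf{u},\bbf{v})$ up to reciprocal scaling. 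The one step worth tightening is the positivity claim: ``shift an infinitesimal amount of mass onto entry $(i,j)$ while preserving the row and column sums'' should be made precise as a cycle perturbation (increase $T_{ij}$ and $T_{kl}$ by $\epsilon$, decrease $T_{il}$ and $T_{kj}$ by $\epsilon$ for suitably chosen positive entries $T_{il},T_{kj}$), after which the $\epsilon\log\epsilon$ term dominates and contradicts optimality; this is standard and your stated prerequisite $\bbf{r},\bbf{c}>0$ (which holds in the WDA setting, where the marginals are uniform) is exactly what makes it work. You are also right to flag that the theorem statement leaves this positivity assumption implicit.
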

\begin{proof} 
See \cite[Lemma 2]{cuturi2013sinkhorn}.
\end{proof} 

Theorem~\ref{thm:Tlambda} indicates that computing the OT matrix $\bbf{T}^\lambda$ is equivalent to the problem of computing the vectors $\bbf{u}\in\RR_+^n$ and $\bbf{v}\in\RR_+^m$ from the given matrix $\bbf{K}\in\RR_+^{n\times m}$ to satisfy \eqref{eq:uvdef}. Such problem belongs to the class of well-known problems called the matrix balancing problems. We leave the discussion of the matrix balancing problem and its algorithms to Section~\ref{subsec:T}.

\paragraph{Costs of Wasserstein distance and regularized Wasserstein distance.} 
Computing the Wasserstein distance~\eqref{eq:Was_dist} comes with a high computational cost.
Assuming $d=n=m$ for simplicity, for any convex optimization algorithms, such as network simplex method or interior point method, the cost of computing the Wasserstein distance~\eqref{eq:Was_dist} scales in super-cubic $O(d^3\log(d))$ \cite{pele2009fast}. On the other hand, as a matrix balancing problem, the regularized Wasserstein distance~\eqref{eq:reg_Was_dist} can be solved efficiently with existing algorithms. For instance, it can be computed with the Sinkhorn-Knopp (SK) iteration \cite{sinkhorn1964relationship,sinkhorn1967diagonal,sinkhorn1967concerning} with a linear convergence rate \cite{franklin1989scaling, knight2008sinkhorn}. The SK algorithm only involves matrix-vector multiplications and scales as $O(d^2)$. We will discuss the SK iteration and its accelerated variants in Section~\ref{subsec:T}.

\paragraph{Applications.}
The regularized Wasserstein distance serves as a powerful tool for computing distances between probability distributions and has found its popularity in machine learning. For example, in sequence pattern analysis, utilizing the regularized Wasserstein distance, a sequence metric that preserves the inherent temporal relationships of the instances in sequences was introduced in \cite{su2017order}. This sequence metric was further used in a discriminant analysis similar to LDA as a supervised linear DR method for sequence data in \cite{su2019order}. In graph classification, the authors in \cite{zhang2021deep} proposed a deep Wasserstein graph discriminant learning method, utilizing graph neural network \cite{scarselli2008graph} and regularized Wasserstein distance as a distance metric.

\paragraph{Remark:} As pointed out in \cite{flamary2018wasserstein}, the entropy smoothed optimal transport problem~\eqref{eq:Sink_dist_dual} of the OT matrix $\bbf{T}^\lambda$ can equivalently be written as
\begin{equation}\label{eq:Sink_dist_dual2}
    \bbf{T}^\lambda=\argmin_{\bbf{T}\in U(\bbf{r},\bbf{c})}\Big\{\lambda\langle\bbf{T},\bbf{M}\rangle- h(\bbf{T})\Big\}.
\end{equation}
We adopt the formulation \eqref{eq:Sink_dist_dual2} for the rest of the paper.


\section{Wasserstein Discriminant Analysis}\label{sec:WDA}
Wasserstein discriminant analysis (WDA) seeks to find a projection matrix that maximizes the dispersion between different classes and minimizes the dispersion within same classes. Meanwhile, WDA can also control global and local inter-relations between data classes using a regularized Wasserstein distance. In this section, we present the formulation of WDA as a nonlinear trace ratio optimization. Then, we conclude the section with discussions on the advantages of WDA.

\paragraph{Data setup.} 
Consider the datasets 
$\{\bbf{x}_i^c\}_{i=1}^{n_c}$ of data points $\bbf{x}_i^c\in\RR^d$ 
for different classes $c=1,2,\ldots,C$, where $n_c$ denotes the number of data points in class $c$.
For each class $c$, a data matrix $\bbf{X}^c\in\RR^{d\times n_c}$ is formed
by constructing its columns to be the vectors $\bbf{x}_i^c$ 
in class $c$:
\[
\bbf{X}^c = [\, \bbf{x}^c_1,\,  \ldots,\,  \bbf{x}^c_{n_c}\, ].
\] 
Without loss of generality, we assume that the data matrices $\bbf{X}^c$ are standardized, i.e., each feature is scaled such that it has mean $0$ and standard deviation $1$. Otherwise, we can always preprocess $\bbf{X}^c$ as 
$$\bbf{X}^c\leftarrow\bbf{X}^c-\frac{1}{n_c}(\bbf{X}^c\bbf{1}_{n_c})\bbf{1}_{n_c}^T$$
to have mean $0$ then divide each feature by its standard deviation to set its standard deviation to be $1$.
Data standardization is a common practice in machine learning algorithms
to transform the features to a common scale while maintaining the structure of the data. This often leads to improved performance in algorithms \cite{han2022data}. 

\paragraph{Definition of Wasserstein discriminant analysis.}
WDA introduced in \cite{flamary2018wasserstein}
presumes empirical measure as the underlying probability measure of the data matrices and uses the regularized Wasserstein distance as the distance metric. Specifically, the regularized Wasserstein distance between two projected data matrices by an orthonormal projection $\bbf{P}\in\RR^{d\times p}$ ($p\ll d)$ is defined as
\begin{align}\label{eq:reg_Was_dist_mat}
W_\lambda(\bbf{P}^T\bbf{X}^c,\bbf{P}^T\bbf{X}^{c'}):=\langle\bbf{T}^{c,c'}(\bbf{P}),\bbf{M}_{\bbf{P}^T\bbf{X}^c,\bbf{P}^T\bbf{X}^{c'}}\rangle,
\end{align}
where 
$\bbf{M}_{\bbf{P}^T\bbf{X}^c,\bbf{P}^T\bbf{X}^{c'}}$ is the cost matrix defined as the Euclidean distances between projected points:
\begin{equation}\label{eq:dist_mat_P}
    \bbf{M}_{\bbf{P^T X}^c,\bbf{P^T X}^{c'}}:=\Big( [\|\bbf{P}^T\bbf{x}_i^c-\bbf{P}^T\bbf{x}_j^{c'}\|_2^2]_{ij} \Big) \in\RR^{n_c\times n_{c'}}
\end{equation}
and $\bbf{T}^{c,c'}(\bbf{P})$ is the OT matrix, defined as the solution of the entropy-smoothed OT problem:
\begin{equation}\label{eq:WDA_otp}
    \bbf{T}^{c,c'}(\bbf{P}):=\argmin_{\bbf{T}\in U_{n_c,n_{c'}}}\, \Big\{\lambda\langle\bbf{T},\bbf{M}_{\bbf{P}^T\bbf{X}^c,\bbf{P}^T\bbf{X}^{c'}}\rangle-h(\bbf{T}) \Big\},
\end{equation}
where $U_{n_c,n_{c'}}$ is the transport polytope between $\bbf{P^T X}^c$ and $\bbf{P^T X}^{c'}$ defined as  
\begin{equation}\label{eq:nonneg_mats}
U_{n_c,n_{c'}}:=\bigg\{\bbf{T} \mid \bbf{T}\in\RR_{+}^{n_c\times n_{c'}},\, 
\bbf{T}\bbf{1}_{n_{c'}}=\frac{1}{n_c}\bbf{1}_{n_c},\, 
\bbf{T}^T\bbf{1}_{n_c}=\frac{1}{n_{c'}}\bbf{1}_{n_{c'}} \bigg\}.
\end{equation}
The regularization parameter $\lambda\geq0$ plays a critical role in dynamically controlling the global and local relations between data points. We highlight the role of $\lambda$ in the discussion of the advantages of WDA.

WDA adopts the formulation of Linear discriminant analysis (LDA) to seek a projection matrix $\bbf{P}\in\RR^{d\times p}$ by solving
\begin{equation}\label{eq:WDA}
\max_{\bbf{P}^T\bbf{P}=I_p}
\frac{\sum_{c,c'>c}W_\lambda(\bbf{P}^T\bbf{X}^c,\bbf{P}^T\bbf{X}^{c'})}
{\sum_{c}W_\lambda(\bbf{P}^T\bbf{X}^c,\bbf{P}^T\bbf{X}^{c})}.
\end{equation}
The numerator and the denominator of~\eqref{eq:WDA} are the sums of the regularized Wasserstein distances~\eqref{eq:reg_Was_dist_mat} between the inter-classes and the intra-classes, respectively. As a maximization problem, WDA~\eqref{eq:WDA} seeks a projection that maximizes the numerator (the dispersion between different classes) and minimizes the denominator (the dispersion within same classes).

\paragraph{Nonlinear trace ratio formulation of WDA.}
By~\eqref{eq:dist_mat_P} and the Frobenius inner product, the inter-class distance between data matrices $\bbf{P}^T\bbf{X}^c$ and $\bbf{P}^T\bbf{X}^{c'}$ is
\begin{align}  W_\lambda(\bbf{P}^T\bbf{X}^c,\bbf{P}^T\bbf{X}^{c'})&=\langle\bbf{T}^{c,c'}(\bbf{P}),\bbf{M}_{\bbf{P}^T\bbf{X}^c,\bbf{P}^T,\bbf{X}^{c'}}\rangle\nonumber\\
    &=\sum_{ij}{T}_{ij}^{c,c'}(\bbf{P})\|\bbf{P}^T\bbf{x}_i^c-\bbf{P}^T\bbf{x}_j^{c'}\|_2^2 \label{eq:weighted_sum} \\
    &=\sum_{ij}{T}_{ij}^{c,c'}(\bbf{P})\tr(\bbf{P}^T(\bbf{x}_i^c-\bbf{x}_j^{c'})(\bbf{x}_i^c-\bbf{x}_j^{c'})^T\bbf{P})\nonumber\\
    &=\tr(\bbf{P}^T[\sum_{ij}{T}_{ij}^{c,c'}(\bbf{P})(\bbf{x}_i^c-\bbf{x}_j^{c'})(\bbf{x}_i^c-\bbf{x}_j^{c'})^T]\bbf{P})\nonumber\\
    &=:\tr(\bbf{P}^T\bbf{C}^{c,c'}(\bbf{P})\bbf{P}),\label{eq:C_def}
\end{align}
where 
\begin{equation}\label{eq:C_cc'}
    \bbf{C}^{c,c'}(\bbf{P}):=
    \sum_{ij}{T}_{ij}^{c,c'}(\bbf{P})(\bbf{x}_i^c-\bbf{x}_j^{c'})(\bbf{x}_i^c-\bbf{x}_j^{c'})^T.
\end{equation}
Similarly, the intra-class distance between $\bbf{P}^T\bbf{X}^c$ to itself is a trace operator
\begin{align}  \label{eq:C_def2} 
W_\lambda(\bbf{P}^T\bbf{X}^c,\bbf{P}^T\bbf{X}^{c}) 
& =:\tr(\bbf{P}^T\bbf{C}^{c,c}(\bbf{P})\bbf{P}),
\end{align} 
where
\begin{equation}\label{eq:C_cc}
    \bbf{C}^{c,c}(\bbf{P}):=
    \sum_{ij}{T}_{ij}^{c,c}(\bbf{P})(\bbf{x}_i^c-\bbf{x}_j^{c})(\bbf{x}_i^c-\bbf{x}_j^{c})^T.
\end{equation}
Thus, by \eqref{eq:C_def} and \eqref{eq:C_def2}, WDA \eqref{eq:WDA} can be reformulated as the nonlinear trace ratio optimization (NTRopt):
\begin{equation}\label{eq:WDA_tropt}
\max_{\bbf{P}^T\bbf{P}=I_p}\left\{
f(\bbf{P}) := \frac{\tr(\bbf{P}^T\bbf{C}_b(\bbf{P})\bbf{P})}{\tr(\bbf{P}^T\bbf{C}_w(\bbf{P})\bbf{P})}\right\},
\end{equation}
where $\bbf{C}_b(\bbf{P})$ and $\bbf{C}_w(\bbf{P})$
are defined as the between and within cross-covariance matrices, respectively:
\begin{align}
    \bbf{C}_b(\bbf{P}) & :=\sum_{c,c'>c}\bbf{C}^{c,c'}(\bbf{P}) 
    = \sum_{c,c'>c}  \sum_{ij}{T}_{ij}^{c,c'}(\bbf{P})(\bbf{x}_i^c-\bbf{x}_j^{c'})(\bbf{x}_i^c-\bbf{x}_j^{c'})^T
    \in \mathbb{R}^{d \times d},  \label{eq:CbCw1} \\
    \bbf{C}_w(\bbf{P}) & :=\sum_{c}\bbf{C}^{c,c}(\bbf{P})
    = \sum_c  \sum_{ij}{T}_{ij}^{c,c}(\bbf{P})(\bbf{x}_i^c-\bbf{x}_j^{c})(\bbf{x}_i^c-\bbf{x}_j^{c})^T 
    \in \mathbb{R}^{d \times d}. \label{eq:CbCw2}
\end{align}
We refer to the WDA problem \eqref{eq:WDA_tropt} as \textbf{NTRopt-WDA}.

\paragraph{Advantages of WDA.}~
Among the number of advantages of WDA \cite{flamary2018wasserstein}, two outstanding ones are that WDA is a generalization of LDA and that WDA can dynamically consider both global and local relations of the data points.
\begin{itemize}
    \item When the regularization parameter $\lambda=0$, each relation between data points is treated equally and NTRopt-WDA~\eqref{eq:WDA_tropt} reduces to LDA. Specifically, by \eqref{eq:Sink_dist_dual2}, the OT matrix is $$\bbf{T}^{c,c'}:=\argmax_{\bbf{T}\in U_{n_c,n_{c'}}}h(\bbf{T}).$$
    The entropy maximizing matrix is simply \[\bbf{T}^{c,c'}=\frac{1}{n_cn_{c'}}\bbf{1}_{n_c,n_{c'}}, 
    \]
    and the cross-covariance matrices become \[
\bbf{C}^{c,c'}=\frac{1}{n_cn_{c'}}\sum_{ij}(\bbf{x}_i^c-\bbf{x}_j^{c'})(\bbf{x}_i^c-\bbf{x}_j^{c'})^T.
\]
Consequently, NTRopt-WDA reduces to LDA:
$$\max_{\bbf{P}^T\bbf{P}=I_p}\frac{\tr(\bbf{P}^T\bbf{S}_b\bbf{P})}{\tr(\bbf{P}^T\bbf{S}_w\bbf{P})}$$ 
    where 
    \begin{equation}\label{eq:LDA_Cb_Cw}
        \bbf{S}_b:=\sum_{c,c'>c}\frac{1}{n_cn_{c'}}\sum_{ij}(\bbf{x}_i^c-\bbf{x}_j^{c'})(\bbf{x}_i^c-\bbf{x}_j^{c'})^T
        \quad \mbox{and} \quad
        \bbf{S}_w:=\sum_{c}\frac{1}{n_c^2}\sum_{ij}(\bbf{x}_i^c-\bbf{x}_j^{c})(\bbf{x}_i^c-\bbf{x}_j^{c})^T.
    \end{equation}
    The between and within cross-covariance matrices $\bbf{S}_b$ and $\bbf{S}_w$, respectively, indicate that all relations between data points carry equal weight. Therefore, LDA strictly enforces global relations among data points.
    
    \item Unlike other linear DR methods that are strictly based on either global relations (such as LDA) or local relations (such as Large Margin Nearest Neighbor \cite{weinberger2009distance}) of the data points, WDA can dynamically consider both global and local relations by varying the regularization parameter $\lambda$ in the regularized Wasserstein distances. 
    
    In the regularized Wasserstein distance~\eqref{eq:weighted_sum}, the transportation weight ${T}_{ij}^{c,c'}(\bbf{P})$ quantifies the importance of the distance $\|\bbf{P}^T\bbf{x}_i^c-\bbf{P}^T\bbf{x}_j^{c'}\|_2^2$, i.e., the Euclidean distance between the projected data point $\bbf{x}_i^c$ in class $c$ and the projected data point $\bbf{x}_j^{c'}$ in class $c'$. By varying the strength of $\lambda$, the regularized Wasserstein distance is able to control the value of the transportation weights and thus the relations between data points. Small $\lambda$ puts more emphasis on the global coherency and large $\lambda$ puts more emphasis on the local structure of the class manifold.
    
    Figure~\ref{fig:WDA_T_example} illustrates the impact that $\lambda$ has on the transportation weights. Shown in the figure is a synthetic dataset consisting of a class that displays a half-moon shape and another class that is a bi-modal Gaussian distribution with a mode on each side of the half-moon shape.
    The transportation weights are represented as the edges between the data points such that their magnitudes are reflected by the visibility of the edges. The left column shows the inter-class relations and the right column shows the intra-class relations. Each row corresponds to a different $\lambda$ value, with the first row corresponding to $\lambda=0.1$, the second row to $\lambda=0.5$, and the third row to $\lambda=1$.
    For both inter-class and intra-class relations, we observe that as $\lambda$ increases from $\lambda=0.1$ to $\lambda=1$ the structure of the class manifold shifts from globality to locality. For instance, when $\lambda=0.1$, if we focus on one of the red data points on the top mode we observe that it has strong inter-class relations with almost all the blue points and strong intra-class relations with almost all the red points. However, when $\lambda=1$, the same red data point has strong inter-class relations only with the blue points that are close-by and strong intra-class relations only with the red points that are in the top modal.

    The impact of $\lambda$ on the global and local relations between the projected data points can be explained by examining the formulation of the transportation weights and the sum constraint imposed on them. 
    Suppose that we have three data points $\bbf{x}_i^c, \bbf{x}_j^{c'}, \bbf{x}_k^{c'}$ such that $\bbf{P}^T\bbf{x}_i^{c}$ is closer to $\bbf{P}^T\bbf{x}_j^{c'}$ than it is to $\bbf{P}^T\bbf{x}_k^{c'}$, i.e,
    \begin{equation}\label{eq:proj_data_rel}
        \|\bbf{P}^T\bbf{x}_i^c - \bbf{P}^T\bbf{x}_j^{c'}\|_2^2 < \|\bbf{P}^T\bbf{x}_i^c - \bbf{P}^T\bbf{x}_k^{c'}\|_2^2.
    \end{equation}
    According to Theorem~\ref{thm:Tlambda}, the transportation weights $T_{ij}^{c,c'}(\bbf{P})$ and $T_{ik}^{c,c'}(\bbf{P})$ between $\bbf{P}^T\bbf{x}_i^c$ and $\bbf{P}^T\bbf{x}_j^{c'}$ and between $\bbf{P}^T\bbf{x}_i^c$ and $\bbf{P}^T\bbf{x}_k^{c'}$, respectively, are equal to
    \begin{equation}\label{eq:Tij}
        T_{ij}^{c,c'}(\bbf{P})=u_i^{c,c'}e^{-\lambda\|\bbf{P}^T\bbf{x}_i^c-\bbf{P}^T\bbf{x}_j^{c'}\|_2^2}v_j^{c,c'}
        \quad \mbox{and} \quad
        T_{ik}^{c,c'}(\bbf{P})=u_i^{c,c'}e^{-\lambda\|\bbf{P}^T\bbf{x}_i^c-\bbf{P}^T\bbf{x}_k^{c'}\|_2^2}v_k^{c,c'},
    \end{equation}
    where $u_i^{c,c'},v_j^{c,c'},v_k^{c,c'}$ are found by an algorithm for matrix balancing problems such as the Sinkhorn-Knopp (SK) algorithm or the accelerated SK (Acc-SK) algorithm.
    According to \eqref{eq:Tij}, the condition \eqref{eq:proj_data_rel} suggests that $T_{ij}^{c,c'}(\bbf{P}) > T_{ik}^{c,c'}(\bbf{P})$, with the value of $\lambda$ determining the magnitude difference between $T_{ij}^{c,c'}(\bbf{P})$ and $T_{ik}^{c,c'}(\bbf{P})$. 
    Specifically, as $\lambda$ increases, the disparity in size between $T_{ij}^{c,c'}(\bbf{P})$ and $T_{ik}^{c,c'}(\bbf{P})$ also increases. This indicates that the relationship between closely-distanced data points is emphasized more than the relationship between widely-distanced data points.
    Furthermore, the sum constraint on the OT matrix $T^{c,c'}(\bbf{P})$, expressed as
    \begin{equation}\label{eq:stoc_const}
        \bbf{T}^{c,c'}(\bbf{P})\bbf{1}_{n_{c'}} = \frac{1}{n_c}\bbf{1}_{n_c}\quad\text{and}\quad \bbf{T}^{c,c'}(\bbf{P})^T\bbf{1}_{n_c} = \frac{1}{n_{c'}}\bbf{1}_{n_{c'}},
    \end{equation}
    implies that satisfying the constraint~\eqref{eq:stoc_const} requires an inverse relationship between transportation weights. 
    When one transportation weight becomes small, the other transportation weights must become large.
    In other words, for large $\lambda$, the weaker data relations between distantly located data points contribute to the stronger data relations between closely located data points. This leads to a more pronounced locality in the relationships within the class manifold.

\end{itemize}

\begin{figure}[t]
    \centering
    \includegraphics[width=0.6\textwidth]{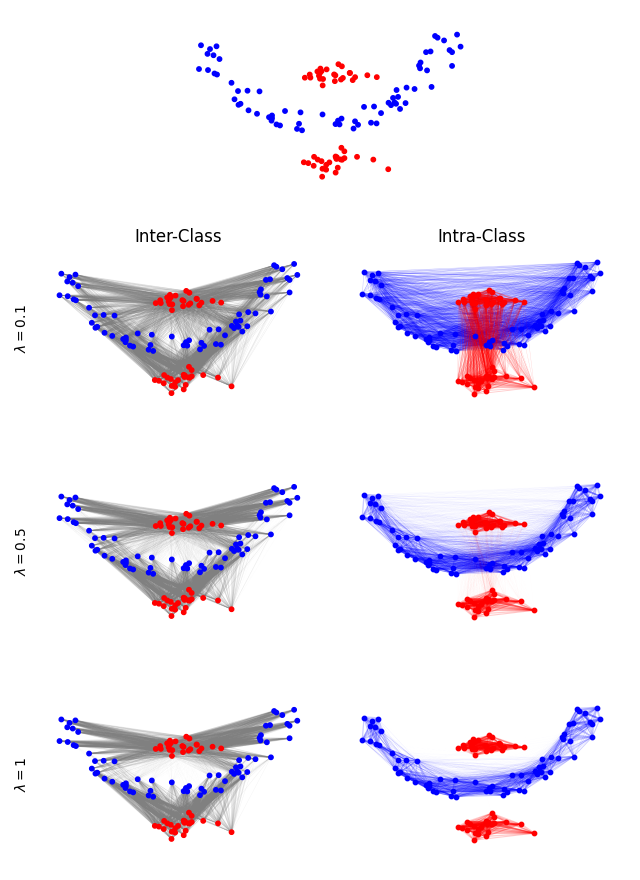}
    \caption{Illustration of globality and locality relations of two class datasets as the regularization parameter takes on the values $\lambda=0.1,0.5,1$. The first plot displays the shape of two classes, and the following left column and the following right column illustrate inter-class relations and intra-class relations, respectively, with each row corresponding to different $\lambda$.
    } \label{fig:WDA_T_example}
\end{figure}


\section{Existing Algorithms}\label{sec:exist}
To the best of our knowledge, there are two algorithms that are specifically designed for solving the NTRopt-WDA~\eqref{eq:WDA_tropt}. In this section, we provide a summary of these algorithms and point out their shortcomings.

\paragraph{WDA-gd.}
In \cite{flamary2018wasserstein}, NTRopt-WDA~\eqref{eq:WDA_tropt} is solved using the projected gradient descent method. The derivative of the OT matrix is computed as the derivative of the matrix $\bbf{T}^L(\bbf{P})=\mathcal{D}(\bbf{u}^L)\bbf{K}\mathcal{D}(\bbf{v}^L)$ obtained from a preset $L^{\mbox{th}}$ iteration of SK algorithm (see Section~\ref{subsec:T} for the discussion on SK algorithm). A recursion occurs in computing the derivative of $\bbf{T}^L(\bbf{P})$ as $\bbf{u}^L$ depends on $\bbf{v}^L$ which in turn depends on $\bbf{u}^{L-1}$. An automatic differentiation is used to compute the derivative of $\bbf{T}^L(\bbf{P})$ in order to utilize the recursion scheme of SK algorithm. This derivative-based algorithm is referred to as WDA-gd.
However, WDA-gd is subject to some practical shortcomings.
\begin{itemize}
    \item A large number of SK iterations may be needed to obtain the OT matrices and their derivatives (see Example~\ref{eg:synth_MB}). 
    If the iteration number $L$ is not large enough, both the OT matrices and their derivatives may be inaccurate.
    \item Computing the derivative of the objective function of NTRopt-WDA comes with a heavy cost that scales quadratically in the dimension size, the number of data points, and the number of data classes. Assuming that each class has $n$ number of data points, at each iteration of WDA-gd, the costs of computing the derivatives of the OT matrices from automatic differentiation are $O(Lpn^2d^2)$, where $L$ is the number of SK iterations, $p$ is the size of subspace dimension, and $d$ is the size of the original dimension. The other computation costs in the derivative of the objective function are $O(pC^2n^2d)$ where $C$ is the number of classes. Therefore, the overall cost for computing the derivative at each iteration is $O(LpC^2n^2d^2)$. 
    
\end{itemize}

\paragraph{WDA-eig.}
In \cite{liu2020ratio}, the authors proposed to approximate NTRopt-WDA~\eqref{eq:WDA_tropt} with the following ratio trace surrogate model
\begin{equation}\label{eq:WDA_rtopt}
    \max_{\bbf{P}^T\bbf{P}=I_p}\tr \bigg((\bbf{P}^T\bbf{C}_b(\bbf{P})\bbf{P})(\bbf{P}^T\bbf{C}_w(\bbf{P})\bbf{P})^{-1} \bigg),
\end{equation}
then further approximate this surrogate model using the NEPv:
\begin{equation}\label{eq:WDA_NGEPv}
\bbf{C}_b(\bbf{P})\bbf{P}=\bbf{C}_w(\bbf{P})\bbf{P}\Lambda,
\end{equation}
where $\bbf{P}$ is the eigenvector corresponding to the $p$ largest eigenvalues of  $(\bbf{C}_b(\bbf{P}),\bbf{C}_w(\bbf{P}))$.
The NEPv~\eqref{eq:WDA_NGEPv} was solved using the self-consistent field (SCF) iteration: 
\begin{equation}\label{eq:GNEPv_SCF}
    \bbf{P}_{k+1}\leftarrow \mbox{eigenvectors of the $p$ largest eigenvalues of $(\bbf{C}_b(\bbf{P_k}),\bbf{C}_w(\bbf{P_k}))$}.
\end{equation}
The approach was named WDA-eig.
There are major issues with this approach.
\begin{itemize}
    \item The OT matrices needed in $\bbf{C}_b(\bbf{P}_k)$ and $\bbf{C}_w(\bbf{P}_k)$ are computed using SK algorithm. As a result, WDA-eig suffers from the same shortcomings as WDA-gd of needing a large number of SK iterations.
    
    \item The surrogate model~\eqref{eq:WDA_rtopt} may be a poor approximation to NTRopt-WDA~\eqref{eq:WDA_tropt} (See Figure~\ref{fig:WDA_WINE_IRIS_Obj} where the surrogate model~\eqref{eq:WDA_rtopt} leads to a suboptimal solution). 
    In fact, when $p>1$ and $(\bbf{C}_b(\bbf{P}),\bbf{C}_w(\bbf{P}))$
    are constant pair $(\bbf{A},\bbf{B})$, it has been shown that the ratio trace optimization
    \begin{equation}\label{eq:RT}
        \max_{\bbf{P}^T\bbf{P}=I_p}\tr \bigg((\bbf{P}^T\bbf{A}\bbf{P})(\bbf{P}^T\bbf{B}\bbf{P})^{-1} \bigg)
    \end{equation}
    could be a poor approximation to the trace ratio optimization
    \begin{equation}\label{eq:TRopt}
        \max_{\bbf{P}^T\bbf{P}=I_p}\frac{\tr(\bbf{P}^T\bbf{A}\bbf{P})}{\tr(\bbf{P}^T\bbf{B}\bbf{P})}.
    \end{equation}
    and degrades the classification outcomes \cite{wang2007trace}. 
    
    \item There is no proper quantification for the quality of the surrogate model~\eqref{eq:WDA_rtopt} to the NEPv~\eqref{eq:WDA_NGEPv}. It is not known if the solution $\bbf{P}$ of the model~\eqref{eq:WDA_rtopt} corresponds to the $p$ largest eigenvectors of the NEPv~\eqref{eq:WDA_NGEPv}. Therefore, the converged projection from the SCF iteration~\eqref{eq:GNEPv_SCF} may not be the solution to the surrogate model~\eqref{eq:WDA_rtopt}.
\end{itemize}


\section{A Bi-Level Nonlinear Eigenvector Algorithm}\label{sec:alg}

In this section, we present a new algorithm, called WDA-nepv, to address the shortcomings of WDA-gd and WDA-eig. 
Unlike WDA-gd, WDA-nepv eliminates the need for computing the derivatives of the objective function $f(\bbf{P})$ in \eqref{eq:WDA_tropt}.
Meanwhile, unlike WDA-eig, WDA-nepv directly tackles the objective function $f(\bbf{P})$ without using a surrogate function.
Therefore, WDA-nepv is {\em derivative-free} and {\em surrogate-model-free}. 
WDA-nepv fully leverages the bi-level structure of NTRopt-WDA~\eqref{eq:WDA_tropt} by formulating both the inner and outer optimizations as NEPvs. This approach presents a unified framework for addressing NTRopt-WDA~\eqref{eq:WDA_tropt}.

\paragraph{Algorithm outline.}
WDA-nepv follows a bi-level or inner-outer iteration scheme in which at each iteration the projection dependence on the cross-covariance matrices is fixed and the projection is updated as the solution of the resulting trace ratio optimization (TRopt). There lie two NEPvs in WDA-nepv: one for computing the OT matrices of the cross-covariance matrices and the other for computing TRopt.
\newcommand{\zapspace}{\topsep=0pt\partopsep=0pt\itemsep=0pt\parskip=0pt}
Here is an outline of WDA-nepv: 
\begin{enumerate} 
    \item Start with an initial projection $\bbf{P}_0$ with $\bbf{P}^T_0 \bbf{P}_0 = \bbf{I}_p$.
    \item At the $k$th iteration:
    \begin{enumerate} 
        \item Compute the OT matrices $\bbf{T}^{c,c'}(\bbf{P}_k)$ and $\bbf{T}^{c,c}(\bbf{P}_k)$ by an NEPv (Sec.~\ref{subsec:T}).
        \item Use the OT matrices $\bbf{T}^{c,c'}(\bbf{P}_k)$ and $\bbf{T}^{c,c}(\bbf{P}_k)$ to form between and within cross-covariances $\bbf{C}_b(\bbf{P}_k)$ and $\bbf{C}_w(\bbf{P}_k)$ by level-3 BLAS (see Sec.~\ref{subsec:cov}).
        \item Compute $\bbf{P}_{k+1}$:
        \begin{equation}\label{eq:FP_TR}
            \bbf{P}_{k+1} = \argmax_{\bbf{P}^T\bbf{P}=I_p}
            \frac{\tr(\bbf{P}^T\bbf{C}_b(\bbf{P}_k)\bbf{P})}
            {\tr(\bbf{P}^T\bbf{C}_w(\bbf{P}_k)\bbf{P})}
        \end{equation}
        by another NEPv for TRopt (see Sec.~\ref{subsec:second_itr});
    \end{enumerate}
    \item Terminate when $\bbf{P}_k$ and $\bbf{P}_{k+1}$ are sufficiently close (see remarks of Algorithm~\ref{alg:BI}).
\end{enumerate}
We solve both NEPvs using the self-consistent field (SCF) iteration. SCF iteration was originally implemented in solving Kohn-Sham equation arising in physics and quantum-chemistry \cite{cances2001self}, and remains an active research topic and a popular method of choice for solving NEPvs \cite{bai2018robust,bai2020optimal,cai2018eigenvector}.


\subsection{Computing OT Matrices by NEPv}\label{subsec:T}

In NTRopt-WDA~\eqref{eq:WDA_tropt}, a key computation step involves computing the OT matrices defined in \eqref{eq:WDA_otp}. The OT matrices take an integral part in quantifying the global and local relations of data points. Therefore, accurate and efficient computations of OT matrices are crucial. According to Theorem~\ref{thm:Tlambda}, the computation of the OT matrices can be recast as a matrix balancing problem. In this section, we discuss Sinkhorn-Knopp (SK) algorithm, one of the most well-known algorithms for the matrix balancing problem, followed by its accelerated variant (named Acc-SK) that can converge in a smaller number of iterations. In contrast to WDA-gd and WDA-eig which use SK algorithm directly, we use Acc-SK to compute the OT matrices. We conclude the section with two examples to demonstrate the efficiency and accuracy of Acc-SK in comparison to SK.

\paragraph{Matrix balancing problem.}
For the sake of simplicity, we denote the OT matrices $\bbf{T}^{c,c'}(\bbf{P}_k)$ and Euclidean distance matrix $\bbf{M}_{\bbf{P}^T \bbf{X}^c,\bbf{P}^T \bbf{X}^{c'}}$ as $\bbf{T}^\lambda\in\RR^{n\times m}_+$ and $\bbf{M}\in\RR_+^{n\times m}$, respectively. 
According to Theorem~\ref{thm:Tlambda}, 
$$\bbf{T}^\lambda=\mathcal{D}(\bbf{u})\bbf{K}\mathcal{D}(\bbf{v})$$
where $\bbf{K}:=e^{-\lambda\bbf{M}}$ is an element-wise exponential, and $\bbf{u}\in\RR_+^n$ and $\bbf{v}\in\RR_+^m$ satisfy the relation
\begin{eqnarray}\label{eq:OT_sol}
\begin{cases}
\mathcal{D}(\bbf{u})\bbf{K}\mathcal{D}(\bbf{v})\bbf{1}_m=\frac{1}{n}\bbf{1}_n\\
\mathcal{D}(\bbf{v})\bbf{K}^T\mathcal{D}(\bbf{u})\bbf{1}_n=\frac{1}{m}\bbf{1}_m.
\end{cases}
\end{eqnarray}
Computing $\bbf{u}\in\RR_+^n$ and $\bbf{v}\in\RR_+^m$ from $\bbf{K}$ to satisfy the relation~\eqref{eq:OT_sol} is known as the matrix balancing problem.

There is a vast literature on the matrix balancing problem and one of its earliest works can be dated back as far as 1937 \cite{kruithof1937telefoonverkeersrekening} where the problem arose in the calculation of traffic flow. Its applications lie in many different areas such as balancing the matrix to improve the sensitivity of the eigenvalue problem \cite{parlett1971balancing} or balancing the matrix pencil to improve the sensitivity of the generalized eigenvalue problem \cite{ward1981balancing, lemonnier2006balancing, dopico2022diagonal} in order to compute more accurate eigenvalues, and in optimal transport \cite{cuturi2013sinkhorn} to compute the distance between probability vectors. See \cite{idel2016review} for further discussions on the applications and the historical remarks of the matrix balancing problems.

The most notable work on the matrix balancing problem is Sinkhorn's paper in 1964 \cite{sinkhorn1964relationship} that showed the matrix balancing problem is solvable for a positive square matrix. In 1967 \cite{sinkhorn1967diagonal}, Sinkhorn further extended his result to prove the existence of a matrix with prescribed row and column sums from a positive rectangular matrix, and with Knopp \cite{sinkhorn1967concerning}, derived conditions for the existence of a doubly stochastic matrix from a non-negative square matrix.

\paragraph{Sinkhorn-Knopp (SK) algorithm.}
Sinkhorn-Knopp (SK) algorithm \cite{sinkhorn1964relationship,sinkhorn1967diagonal,sinkhorn1967concerning}, also known as RAS method or Bregman's balancing method, is one of the most well-known algorithms for the matrix balancing problem. SK iteratively scales the matrix involving only matrix-vector multiplications, and converges linearly \cite{kalantari1996complexity,franklin1989scaling, knight2008sinkhorn}. Applied to computing the OT matrix $\bbf{T}^\lambda$, SK obtains the vectors $\bbf{v}$ and $\bbf{u}$ satisfying~\eqref{eq:OT_sol} by an alternating updating scheme on the matrix $\bbf{K}$:
\begin{eqnarray}\label{eq:Sink}
\begin{cases}
\bbf{v}_{k+1}=\frac{1}{m}\bbf{1}_m./(\bbf{K}^T\bbf{u}_{k}) \\
\bbf{u}_{k+1}=\frac{1}{n}\bbf{1}_n./(\bbf{K}\bbf{v}_{k+1}).
\end{cases}
\end{eqnarray}

\paragraph{Reformulation of SK as nonlinear mapping and NEPv.}
While simple, the SK algorithm is subject to slow convergence \cite{aristodemo2020accelerating, knight2008sinkhorn}. There are variants of the SK algorithm for acceleration. They include a different updating scheme for scaling the matrix \cite{parlett1982methods}, an inner-outer iteration algorithm based on Newton's method \cite{knight2013fast}, a greedy Sinkhorn algorithm called Greenkhorn algorithm \cite{alaya2019screening,altschuler2017near}, and the SCF iteration for solving the NEPv formulation \cite{aristodemo2020accelerating}.

In our algorithm WDA-nepv, we consider accelerating the SK algorithm by solving the NEPv formulation. We note that the SK iteration~\eqref{eq:Sink} for obtaining the vector $\bbf{v}$
can be rewritten as the following iteration 
\begin{equation}\label{eq:fixed_point_itr}
    \bbf{v}_{k+1}=R(\bbf{v}_k),
\end{equation}
where
\begin{equation}\label{eq:R}
   R(\bbf{v}) :=\mathcal{D}^{-1}\bigg(\bbf{K}^T\mathcal{D}^{-1}\bigg(\bbf{K}\bbf{v}\bigg)\frac{\bbf{1}_n}{n}\bigg)\frac{\bbf{1}_m}{m}
\end{equation}
is a nonlinear mapping. 
$R$ is a contraction mapping on the space of non-negative vectors~\cite{brualdi1966diagonal}. Thus, the fixed point of $R$ exists and is unique.

In the following, we show that the fixed point $\bbf{v}$ of the mapping $R$ is an eigenvector corresponding to the largest eigenvalue of $J_R(\bbf{v})\in\RR^{m\times m}$, the Jacobian of the mapping $R$. First, we have the following theorem due to \cite[Theorem 5]{aristodemo2020accelerating}.\footnote{Although only positive square matrices were subject to discussion in \cite{aristodemo2020accelerating}, we can extend their results to positive rectangular matrices.}

\begin{theorem}\label{thm:Jac_R}
For a positive vector $\bbf{v}$, 
it holds
\begin{equation}\label{eq:Jac_R}
    R(\bbf{v}) = J_R(\bbf{v}) \bbf{v},
\end{equation}
where $J_R$ is the Jacobian of the
mapping $R$ defined as in \eqref{eq:R}.
\end{theorem}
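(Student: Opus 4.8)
The plan is to exploit the specific algebraic shape of the mapping $R$ in \eqref{eq:R}: it is a composition of two affine-in-the-reciprocal operations, each of which is ``positively homogeneous of degree $-1$'' after the diagonal inversion. The key observation I would use is that $R$ is homogeneous of degree $1$ in $\bbf{v}$, i.e. $R(t\bbf{v}) = t\,R(\bbf{v})$ for all $t>0$. To see this, note $\bbf{K}(t\bbf{v}) = t(\bbf{K}\bbf{v})$, so $\mathcal{D}^{-1}(\bbf{K}(t\bbf{v}))\tfrac{\bbf{1}_n}{n} = \tfrac{1}{t}\mathcal{D}^{-1}(\bbf{K}\bbf{v})\tfrac{\bbf{1}_n}{n}$; applying $\bbf{K}^T$ and inverting again via $\mathcal{D}^{-1}(\cdot)\tfrac{\bbf{1}_m}{m}$ multiplies by $t$, giving $R(t\bbf{v}) = t\,R(\bbf{v})$. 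Once homogeneity of degree $1$ is established, Euler's homogeneous function theorem yields $J_R(\bbf{v})\,\bbf{v} = 1\cdot R(\bbf{v})$, which is exactly \eqref{eq:Jac_R}.

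First I would write $R$ explicitly as $R(\bbf{v}) = \tfrac{1}{m}\,\bbf{1}_m \,./\,\big(\bbf{K}^T\big(\tfrac{1}{n}\bbf{1}_n\,./\,(\bbf{K}\bbf{v})\big)\big)$ and verify componentwise the scaling identity $R(t\bbf{v}) = tR(\bbf{v})$ for $t>0$, being careful that positivity of $\bbf{v}$ and of $\bbf{K}$ guarantees all the reciprocals are well-defined and the map is smooth on the open positive orthant. Then I would differentiate the identity $R(t\bbf{v}) = tR(\bbf{v})$ with respect to $t$ and evaluate at $t=1$: the left-hand side gives $J_R(\bbf{v})\,\bbf{v}$ by the chain rule, and the right-hand side gives $R(\bbf{v})$, establishing \eqref{eq:Jac_R}. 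Alternatively, and perhaps more transparently for the paper, I would compute $J_R(\bbf{v})$ directly by the chain rule: the differential of $\bbf{w}\mapsto \bbf{a}\,./\,\bbf{w}$ at $\bbf{w}$ is $-\mathcal{D}(\bbf{a}\,./\,\bbf{w}^{\,2})$, i.e. $-\mathcal{D}(\bbf{a})\mathcal{D}(\bbf{w})^{-2}$, so composing the three differentials ($\bbf{v}\mapsto\bbf{K}\bbf{v}$, then reciprocal-scale, then $\bbf{K}^T$, then reciprocal-scale) produces $J_R(\bbf{v}) = \mathcal{D}(R(\bbf{v}))\,\mathcal{D}(\bbf{K}^T\bbf{p})^{-1}\,\bbf{K}^T\,\mathcal{D}(\bbf{p})^{-1}\,\mathcal{D}(\bbf{K}\bbf{v})^{-1}\,\bbf{K}$ up to constants, where $\bbf{p} := \tfrac{1}{n}\bbf{1}_n\,./\,(\bbf{K}\bbf{v})$; one then checks that right-multiplying this expression by $\bbf{v}$ telescopes — $\mathcal{D}(\bbf{K}\bbf{v})^{-1}(\bbf{K}\bbf{v}) = \bbf{1}_n$, and so on — collapsing to $R(\bbf{v})$.

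I expect the main obstacle to be purely bookkeeping: keeping the diagonal-matrix manipulations straight, in particular the identity $\mathcal{D}(\bbf{a})^{-1}\bbf{a} = \bbf{1}$ and the fact that $\mathcal{D}(\bbf{a})\bbf{b} = \mathcal{D}(\bbf{b})\bbf{a}$, which are what make the telescoping work. There is no genuine analytic difficulty since everything is smooth on the positive orthant and the result is already available as \cite[Theorem 5]{aristodemo2020accelerating} for positive square matrices; the only new content is the trivial observation that none of the argument uses squareness of $\bbf{K}$, so the extension to rectangular $\bbf{K}\in\RR_+^{n\times m}$ (with the two reciprocal-scalings now using $\bbf{1}_n/n$ and $\bbf{1}_m/m$ respectively) is immediate. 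I would therefore present the homogeneity-plus-Euler argument as the clean proof and relegate the direct Jacobian computation to a remark or omit it, citing \cite{aristodemo2020accelerating} for the square case and noting the rectangular extension is verbatim.
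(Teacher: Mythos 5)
Your proposal is correct, and your primary argument is a genuinely different (and cleaner) route than the paper's. You observe that $R$ is positively homogeneous of degree one on the open positive orthant --- each of the two reciprocal-scalings flips the homogeneity degree from $+1$ to $-1$ and back --- and then invoke Euler's identity, i.e.\ differentiate $R(t\bbf{v})=tR(\bbf{v})$ at $t=1$ to get $J_R(\bbf{v})\bbf{v}=R(\bbf{v})$ directly. The paper instead takes your ``alternative'' route: it computes $J_R(\bbf{v})$ explicitly by the chain rule, arriving at $J_R(\bbf{v})=\frac{m}{n}\mathcal{D}^2(R(\bbf{v}))\bbf{K}^T\mathcal{D}^2(S(\bbf{v}))\bbf{K}$ with $S(\bbf{v})=\bbf{1}_n./(\bbf{K}\bbf{v})$, and then verifies the telescoping product $J_R(\bbf{v})\bbf{v}=R(\bbf{v})$ by a chain of diagonal-matrix identities. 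Your Euler argument is shorter and explains \emph{why} the identity holds, whereas the paper's computation is not wasted effort: the explicit closed form of $J_R(\bbf{v})$ is needed downstream anyway, both to form $J_k=J_R(\bbf{v}_k)$ in Algorithm~\ref{alg:T} and to read off that $J_R(\bbf{v})$ is entrywise positive so that the Perron--Frobenius argument identifies $\bbf{v}$ with the dominant eigenvector; so if you presented only the homogeneity proof you would still have to derive \eqref{eq:Jac} separately. One small slip in your sketched alternative: the middle factor of your chain-rule product should be $\mathcal{D}(\bbf{p})\,\mathcal{D}(\bbf{K}\bbf{v})^{-1}$ (equivalently $n\,\mathcal{D}(\bbf{p})^2$), not $\mathcal{D}(\bbf{p})^{-1}\mathcal{D}(\bbf{K}\bbf{v})^{-1}$, which differs by more than a constant; with the corrected factor the telescoping collapses exactly to $R(\bbf{v})$ with no leftover scalars, matching the paper. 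Your remark that nothing in the argument uses squareness of $\bbf{K}$ is also exactly the point the paper makes in its footnote when extending \cite[Theorem 5]{aristodemo2020accelerating} to rectangular matrices.
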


\begin{proof}
The equation~\eqref{eq:R} can be written as the composition of mappings
\begin{eqnarray}\label{eq:R_vk}
  R(\bbf{v})=\frac{n}{m}U(\bbf{K}^TS(\bbf{v}))
\end{eqnarray}
where $U$ and $S$ are defined as
\begin{equation}\label{eq:US}
    U(\bbf{v})=\bbf{1}./\bbf{v} \quad \mbox{and} \quad 
    S(\bbf{v})=U(\bbf{K}\bbf{v}).
\end{equation}
Note that we define $U$ as a component-wise reciprocal operator for vectors of any size. In particular, since $\bbf{K}$ is $n\times m$, $U$ in the operator $S$ operates on the vector of size $n$.

The Jacobian of $U$ is
\begin{align}\label{eq:JU}
    J_U(\bbf{v})&=-\mathcal{D}^{-2}(\bbf{v})
    =-\mathcal{D}^2(\bbf{1}./\bbf{v})
    =-\mathcal{D}^2(U(\bbf{v})),
\end{align}
where the last equality holds by~\eqref{eq:US}. 
The Jacobian of $S$ is
\begin{align}\label{eq:JS}
    J_S(\bbf{v})=-\mathcal{D}^{-2}(\bbf{K}\bbf{v})\bbf{K}
    =-\mathcal{D}^2(\bbf{1}_n./(\bbf{K}\bbf{v}))\bbf{K}
    =-\mathcal{D}^2(U(\bbf{K}\bbf{v}))\bbf{K}
    =-\mathcal{D}^2(S(\bbf{v}))\bbf{K},
\end{align}
where the last two equalities hold by~\eqref{eq:US}.

By the chain rule, the Jacobian matrix of the mapping $R$ is
\begin{align*}
    J_R(\bbf{v})&=J_{\frac{n}{m}U\bbf{K}^TS}(\bbf{v})
    =\frac{n}{m}J_U(\bbf{K}^TS(\bbf{v}))\cdot \bbf{K}^T\cdot J_S(\bbf{v})
\end{align*}
Utilizing the results~\eqref{eq:JU}, \eqref{eq:JS} and substituting the mappings $R$~\eqref{eq:R_vk}, $J_R(\bbf{v})$ is further simplified as
\begin{align}\label{eq:Jac}
    J_R(\bbf{v})&=\frac{n}{m}[-\mathcal{D}^2(U(\bbf{K}^TS(\bbf{v})))][\bbf{K}^T][-\mathcal{D}^2(S(\bbf{v}))\bbf{K}]\nonumber\\
    &=\frac{m}{n}\mathcal{D}^2(R(\bbf{v}))\bbf{K}^T\mathcal{D}^2(S(\bbf{v}))\bbf{K}
\end{align}
Then,
\begin{align*}
    J_R(\bbf{v})\bbf{v}&=\frac{m}{n}\mathcal{D}^2(R(\bbf{v}))\bbf{K}^T\mathcal{D}^2(S(\bbf{v}))\bbf{K}\bbf{v}
    =\frac{m}{n}\mathcal{D}^2(R(\bbf{v}))\bbf{K}^T\mathcal{D}^2(U(\bbf{K}\bbf{v}))\bbf{K}\bbf{v}\\
    &=\frac{m}{n}\mathcal{D}^2(R(\bbf{v}))\bbf{K}^T\mathcal{D}^{-2}(\bbf{K}\bbf{v})\bbf{K}\bbf{v}
    =\frac{m}{n}\mathcal{D}^2(R(\bbf{v}))\bbf{K}^T\mathcal{D}^{-1}(\bbf{K}\bbf{v})\bbf{1}_n\\
    &=\frac{m}{n}\mathcal{D}^2(R(\bbf{v}))\bbf{K}^T\mathcal{D}(U(\bbf{K}\bbf{v}))\bbf{1}_n
    =\frac{m}{n}\mathcal{D}^2(R(\bbf{v}))\bbf{K}^TS(\bbf{v})\\
    &=\frac{n}{m}\mathcal{D}^{-2}(\bbf{K}^TS(\bbf{v}))\bbf{K}^TS(\bbf{v})
    =\frac{n}{m}\mathcal{D}^{-1}(\bbf{K}^TS(\bbf{v}))\bbf{1}_m\\
    &=\frac{n}{m}\mathcal{D}(U(\bbf{K}^TS(\bbf{v})))\bbf{1}_m
    =\mathcal{D}(R(\bbf{v}))\bbf{1}_m
    =R(\bbf{v})
\end{align*}
\end{proof}

By Theorem~\ref{thm:Jac_R}, the fixed point $\bbf{v}$ of the mapping $R$ satisfies
\begin{equation}\label{eq:fp_eig}
    J_R(\bbf{v})\bbf{v}=\bbf{v},
\end{equation}
i.e., $\bbf{v}$ is an eigenvector of $J_R(\bbf{v})$ corresponding to the eigenvalue $\mu=1$. Note that the fixed point $\bbf{v}$ is strictly positive and the exponential Euclidean distance matrix $\bbf{K}$ is strictly positive. Consequently, it is clear from the equation~\eqref{eq:Jac} that $J_R(\bbf{v})$ is a positive matrix. Therefore, by the Perron-Frobenius theorem \cite{golub2013matrix}, $\bbf{v}$ must be the eigenvector corresponding to the largest eigenvalue of the NEPv:
\begin{equation}\label{eq:NEPv}
    J_R(\bbf{v})\bbf{v}=\mu\bbf{v}
\end{equation}

\paragraph{Accelerated Sinkhorn-Knopp (Acc-SK) algorithm.}
The above discussion indicates that the SK algorithm~\eqref{eq:Sink} is equivalent to finding the eigenvector $\bbf{v}$ corresponding to the largest eigenvalue of $J_R(\bbf{v})$. 
To accelerate the SK iteration, we can apply the following SCF iteration for solving the NEPv~\eqref{eq:NEPv}: 
\begin{equation}\label{eq:Jac_SCF}
    \bbf{v}_{k+1}\leftarrow 
    \mbox{eigenvector of $\mu_{\max}(J_R(\bbf{v}_k))$.}
\end{equation}
The SCF iteration~\eqref{eq:Jac_SCF} is referred to as Acc-SK. 
Once Acc-SK converges to $\bbf{v}$, $\bbf{u}$ is computed by an extra update \eqref{eq:Sink}, i.e.,
\[
\bbf{u}=\frac{1}{n}\bbf{1}_n./(\bbf{K}\bbf{v}).
\]
In summary, Acc-SK for computing the OT matrix $\bbf{T}^\lambda$ is shown in Algorithm~\ref{alg:T}.
The eigenvalue problem in line 4 of Algorithm~\ref{alg:T} can be efficiently solved by a Krylov-subspace eigensolver such as the implicitly restarted Arnoldi method \cite{sorensen1997implicitly}.
The stopping criteria in line 5 checks whether the distance between $\bbf{v}_{k+1}$ and $\bbf{v}_k$ is smaller than a preset tolerance $tol$.

\begin{algorithm}[h]
\caption{Computation of $\bbf{T}^\lambda$ via NEPv} \label{alg:T}
\textbf{Input}: Matrix $\bbf{K}\in\RR_+^{n\times m}$, tolerance tol\\
\textbf{Output}:  the OT matrix $\bbf{T}^\lambda$ in \eqref{eq:T}
    \begin{algorithmic}[1]
    \STATE Create a starting $\bbf{v}_0>0$
    \FOR{$k=0,1,\ldots$}
    \STATE Set $J_k=J_R(\bbf{v}_k)$
    \STATE Set $\bbf{v}_{k+1}$ as the eigenvector of the largest eigenvalue of $J_k$
    \IF{$d(\bbf{v}_{k+1},\bbf{v}_k)< tol$}
    \STATE return $\bbf{v}=\bbf{v}_{k+1}$
    \ENDIF
    \ENDFOR
    \STATE Compute $\bbf{u}=\frac{1}{n}\bbf{1}_n./(\bbf{K}\bbf{v})$
    \STATE Return $\bbf{T}^\lambda=\mathcal{D}(\bbf{u})\bbf{K}\mathcal{D}(\bbf{v})$
    \end{algorithmic}
\end{algorithm}

\paragraph{Advantages of Acc-SK.}
Acc-SK significantly improves over SK when the magnitude of the components of an input matrix is small \cite{aristodemo2020accelerating}.
In the context of NTRopt-WDA~\eqref{eq:WDA_tropt}, this situation can occur when the regularization parameter $\lambda$ is sufficiently large, i.e., when the local relationships between data points are emphasized.
We illustrate the acceleration of Acc-SK over SK in the following two examples: the first example for general synthetic matrices and the second example for computing the OT matrices of NTRopt-WDA~\eqref{eq:WDA_tropt}. For both SK and Acc-SK, the starting vector is taken as 
$\bbf{v}_0=\frac{1}{m}\bbf{1}_m$ and the convergence behavior is displayed by the error $\|\bbf{v}_{k+1}-\bbf{v}_k\|_2$. We show that while SK is subject to slow convergence, Acc-SK converges efficiently in a small number of iterations.

\begin{example}\label{eg:gen_synth_MB}
{\rm  
Consider the following synthetic matrices
\begin{align*}
    \bbf{K}_1 =
    \begin{bmatrix}
    1 & \epsilon \\
    1 & 1
    \end{bmatrix},\quad
    \bbf{K}_2 = 
    \begin{bmatrix}
    1 & \epsilon \\
    1 & 1 \\
    1 & 1
    \end{bmatrix}
\end{align*}
where $\epsilon$ is a small positive number. 
The convergence behaviors of SK and Acc-SK for the matrices $\bbf{K}_1$ and $\bbf{K}_2$ with $\epsilon=10^{-8}$ in Figure~\ref{fig:SKvsAccSK} shows that SK is unable to converge within 50 iterations for matrix $\bbf{K}_1$ and that while it converges for the matrix $\bbf{K}_2$, it does so slowly. Meanwhile, Acc-SK converges in around 10 iterations for both matrices.

\begin{figure}[h]
\centering
\subfloat[Matrix $\bbf{K}_1$]{{\includegraphics[width=0.45\textwidth]{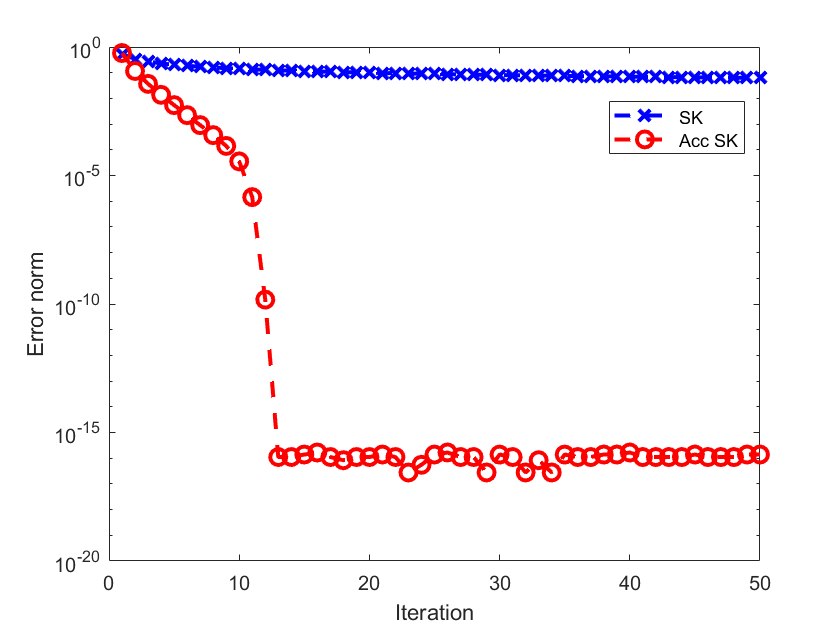} }}%
\quad
\subfloat[Matrix $\bbf{K}_2$]{{\includegraphics[width=0.45\textwidth]{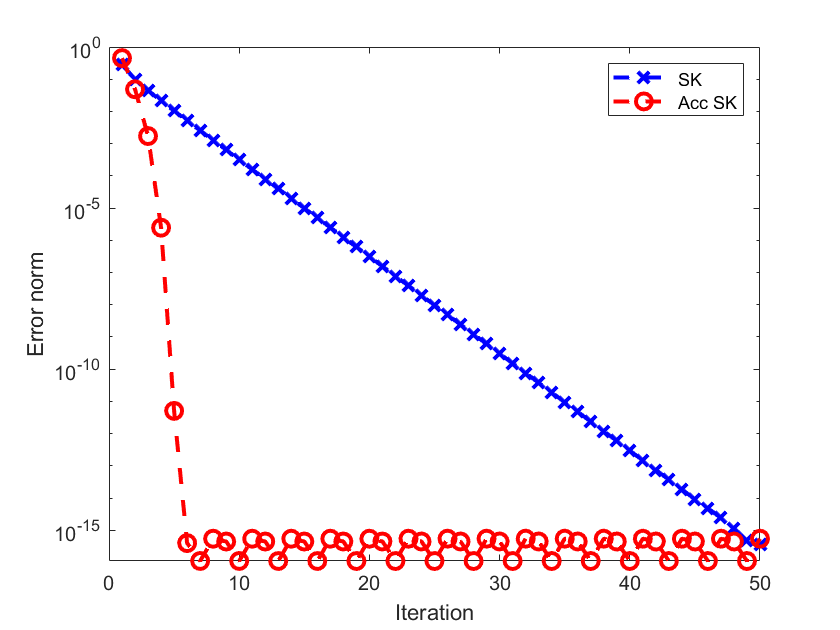} }}%
\caption{Convergence behaviors of SK and Acc-SK on matrices $\bbf{K}_1$ and $\bbf{K}_2$. Each plot displays the error $\|\bbf{v}_{k+1}-\bbf{v}_k\|_2$ of SK and Acc-SK.}\label{fig:SKvsAccSK}
\end{figure}

}
\end{example}

\begin{example}\label{eg:synth_MB}
{\rm
Using the synthetic dataset (see Sec.~\ref{subsec:data} for its description), we create data matrices $\bbf{X}^1\in\RR^{2\times60},\bbf{X}^2\in\RR^{2\times100}$. We choose a sufficiently large regularization parameter $\lambda=1$ to emphasize the local relations of the data points and form exponential Euclidean distance matrices $\bbf{K}^{1,2}\in\RR_+^{60\times100}$ and $\bbf{K}^{1,1}\in\RR_+^{60\times60}$. With this choice of the regularization parameter, some of the components of $\bbf{K}^{1,2}$ and $\bbf{K}^{1,1}$ are small. 
Figure~\ref{fig:SKvsAccSK_Synth1} depicts the convergence behavior.
We observe that SK exhibits slow convergence for $\bbf{K}^{1,2}$ and does not converge below the error $10^{-5}$ for $\bbf{K}^{1,1}$. On the other hand, Acc-SK displays fast and accurate convergence, converging in just two iterations for both $\bbf{K}^{1,2}$ and $\bbf{K}^{1,1}$. 

\begin{figure}[H]
\centering
\includegraphics[width=0.8\textwidth]{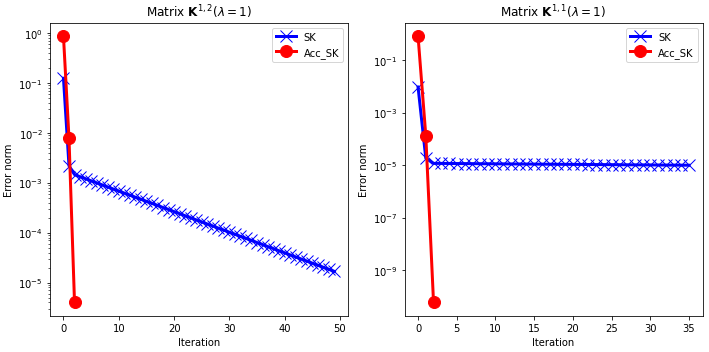}
\caption{Convergence behaviors of SK and Acc-SK on matrices $\bbf{K}^{1,2}$ and $\bbf{K}^{1,1}$. Each plot displays the error $\|\bbf{v}_{k+1}-\bbf{v}_k\|_2$ of SK and Acc-SK.} \label{fig:SKvsAccSK_Synth1}
\end{figure}

}
\end{example}

\subsection{Computation of Cross-covariance Matrices \texorpdfstring{$\bbf{C}_b(\bbf{P}_k)$}{CbPk} and \texorpdfstring{$\bbf{C}_w(\bbf{P}_k)$}{CwPk}} \label{subsec:cov}

Recall that the between and within cross-covariance matrices $\bbf{C}_b(\bbf{P}_k)$ and $\bbf{C}_w(\bbf{P}_k)$ defined in \eqref{eq:CbCw1} and \eqref{eq:CbCw2}
\begin{align*}
    \bbf{C}_b(\bbf{P}_k) &= \sum_{c,c'>c}  \sum_{ij}{T}_{ij}^{c,c'}(\bbf{P}_k)(\bbf{x}_i^c-\bbf{x}_j^{c'})(\bbf{x}_i^c-\bbf{x}_j^{c'})^T, \\
    \bbf{C}_w(\bbf{P}_k) &= \sum_c  \sum_{ij}{T}_{ij}^{c,c}(\bbf{P}_k)(\bbf{x}_i^c-\bbf{x}_j^{c})(\bbf{x}_i^c-\bbf{x}_j^{c})^T 
\end{align*}
are the weighted sums of the outer products of the data points, where the weights are the components of the OT matrices. The number of double sums in $\bbf{C}_b(\bbf{P}_k)$ and $\bbf{C}_w(\bbf{P}_k)$ are 
\[
n_b:=\sum_{c,c'>c}^{C}n_cn_{c'}
\quad \mbox{and} \quad 
n_w:=\sum_{c}^{C}n_c^2,
\]
respectively. Overall, the number of double sums grow quadratically in the number of classes or data points.

A straightforward implementation for computing these cross-covariance matrices is by a double for-loop with $n_b$ and $n_w$ calls to level-2 BLAS in forming the outer products of $\bbf{C}_b(\bbf{P}_k)$ and $\bbf{C}_w(\bbf{P}_k)$, respectively. 
However, memory access is the bottleneck on modern computing platforms.
Unlike level-2 BLAS, level-3 BLAS makes full reuse of data and avoids excessive movement of data to and from memory \cite{dongarra1990set}.
Therefore, we propose to compute the cross-covariance matrices using level-3 BLAS to improve computational efficiency.
Specifically, we reformulate the sum of the outer products as a matrix-matrix multiplication:
\begin{equation} \label{eq:CbCwBlas3}
    \bbf{C}_b(\bbf{P}_k) =\widehat{\bbf{C}}_b(\bbf{P}_k)\widehat{\bbf{C}}^T_b(\bbf{P}_k)
    \quad \mbox{and} \quad 
    \bbf{C}_w(\bbf{P}_k)=\widehat{\bbf{C}}_w(\bbf{P}_k) \widehat{\bbf{C}}^T_w(\bbf{P}_k)
\end{equation}
where 
$\bbf{\widehat{\bbf{C}}}_b(\bbf{P}_k)\in\RR^{d\times n_b}$ and ${\widehat{\bbf{C}}}_w(\bbf{P}_k)\in\RR^{d\times n_w}$ have columns 
\[
\sqrt{{T}_{ij}^{c,c'}(\bbf{P}_k)}(\bbf{x}_i^c-\bbf{x}_j^{c'})
\quad \mbox{and} \quad
\sqrt{{T}_{ij}^{c,c}(\bbf{P}_k)}(\bbf{x}_i^c-\bbf{x}_j^{c}),
\]
respectively. 
As level-3 BLAS, the matrix-matrix multiplications~\eqref{eq:CbCwBlas3} are far more efficient than forming the sum of outer products. In particular, the improvement in efficiency becomes more prominent as the data dimension gets large. For an instance of the Synthetic dataset (see Section~\ref{sec:num} for its description) with $d=2000$, the running time of the sum of outer products took 127 seconds while the level-3 BLAS formulation only took 0.63 seconds.

We note that efficient computations of the cross-covariance matrices using level-3 BLAS is a novelty of this work. In WDA-gd, cross-covariance matrices are not formed explicitly. Instead, each regularized Wasserstein distance is computed directly as the weighted sum~\eqref{eq:weighted_sum} and the reciprocal of the objective function~\eqref{eq:WDA} is an input in Pymanopt \cite{JMLR:v17:16-177}, 
an optimizer on matrix manifold. In WDA-eig, the cross-covariance matrices are implemented directly from their double sum formulations using tensor operations.

\subsection{Solving Trace Ratio Optimization by NEPv}\label{subsec:second_itr}

Now let us consider the outer iteration for solving the TRopt~\eqref{eq:FP_TR}.
For notation convenience, set
\[
\bbf{A} = \bbf{C}_b(\bbf{P}_k), \quad
\bbf{B} = \bbf{C}_w(\bbf{P}_k).
\]
Note that by \eqref{eq:CbCwBlas3}, $\bbf{A}$ and $\bbf{B}$ are symmetric positive-definite. Furthermore, given that the number of data points is typically larger than the dimension size, we can safely assume that $\bbf{A}$ and $\bbf{B}$ are positive definite.
Then, the TRopt~\eqref{eq:FP_TR} is equivalent to
\begin{equation}\label{eq:TR}
\max_{\bbf{P}^T\bbf{P}=I_p}\left\{ q(\bbf{P}) :=\frac{\tr(\bbf{P}^T\bbf{A}\bbf{P})}{\tr(\bbf{P}^T\bbf{B}\bbf{P})} \right\}. 
\end{equation}
TRopt~\eqref{eq:TR} has been well-studied in the literature \cite{wang2007trace,ngo2010trace,zhang2014note}. It is known that the solution $\bbf{P}$ of \eqref{eq:TR} solves the following NEPv:
\begin{equation}\label{eq:TR_NEPv}
    \bigg(\bbf{A}-q(\bbf{P})\bbf{B} \bigg)\bbf{P}=\bbf{P}\Lambda_{\bbf{P}}
\end{equation}
where $\bbf{P}$ is an orthonormal eigenbasis corresponding to 
the $p$ largest eigenvalues of $\bbf{A}-q(\bbf{P})\bbf{B}$ \cite{zhang2014note}.

The gateway algorithm for solving the NEPv~\eqref{eq:TR_NEPv} is again SCF iteration:
\begin{equation}\label{eq:TR_SCF}
    \bbf{P}_{j+1}\leftarrow \mbox{eigenvectors of the $p$ largest eigenvalues of }\bbf{A}-q(\bbf{P}_j)\bbf{B}.
\end{equation}
The SCF iteration \eqref{eq:TR_SCF} is monotonic, globally convergent to the global maximizer for any initial projection $\bbf{P}_0$, and has a local quadratic convergence \cite{cai2018eigenvector}. 

In summary, the algorithm for solving TRopt~\eqref{eq:TR} is shown in  Algorithm~\ref{alg:TR}. 
The stopping criteria $d(\bbf{P}_{j+1},\bbf{P}_j)< tol$ in line 5 represents the distance between 
$\bbf{P}_{j+1}$ and $\bbf{P}_j$, namely, the largest principal angle of the subspaces spanned by the matrices \cite{golub2013matrix}.

\begin{algorithm}[h]
    \caption{Solver for the TRopt~\eqref{eq:TR}} \label{alg:TR}
   \textbf{Input}: Matrices $\bbf{A},\bbf{B}\in\RR^{d\times d}$, 
    tolerance $\mbox{tol}$ \\ 
    \textbf{Output}: solution $\bbf{P}$ to \eqref{eq:TR}
    \begin{algorithmic}[1]
    \STATE Create a starting $\bbf{P}_0$ with $\bbf{P}_0^T\bbf{P}_0=I_p$
    \FOR{$j=0,1,\ldots$}
    
    \STATE Set $\bbf{H}(\bbf{P}_j)=\bbf{A}-q(\bbf{P}_j)\bbf{B}$ where $q(\bbf{P}_j)=\frac{\tr(\bbf{P}_j^T\bbf{A}\bbf{P}_j)}{\tr(\bbf{P}_j^T\bbf{B}\bbf{P}_j)}$


    \STATE Set $\bbf{P}_{j+1}$ as the eigenvectors of the $p$ largest eigenvalues of $\bbf{H}(\bbf{P}_j)$
    
    \IF{$d(\bbf{P}_{j+1},d(\bbf{P}_j))<tol$}
    
    \STATE return $\bbf{P}=\bbf{P}_{j+1}$
    
    \ENDIF
    
    \ENDFOR
    \end{algorithmic}
\end{algorithm}

\subsection{WDA-nepv}

Combining Algorithm~\ref{alg:T} and Algorithm~\ref{alg:TR}, WDA-nepv is presented in Algorithm~\ref{alg:BI}.
A possible choice of the initial projection $\bbf{P}_0$ 
is using a random orthogonal projection or using the projection obtained as the solution to PCA or as the solution to LDA.
In line 4, the initial for Algorithm~\ref{alg:T} can be chosen as the uniformly distributed probability vector, i.e., $\bbf{v}_0=\bbf{1}_{n_{c'}}/{n_{c'}}$ and $\bbf{v}_0=\bbf{1}_{n_c}/{n_c}$ for $\bbf{T}^{c,c'}(\bbf{P}_k)$ and $\bbf{T}^{c,c}(\bbf{P}_k)$, respectively.
In line 6, the initial projection for Algorithm~\ref{alg:TR} can be chosen as $\bbf{P}_k$, the current $k$th projection.
The stopping criteria $d(\bbf{P}_{k+1},\bbf{P}_k)< tol$ 
in line 8 is measured as the largest principal angle between the subspaces spanned by the matrices $\bbf{P}_{k+1}$ and $\bbf{P}_k$.

Regarding the complexity of Algorithm~\ref{alg:BI}, suppose that the numbers of data points of classes are the same, i.e., $n=n_c$ for all $c=1,2,\ldots, C$. 
Then, for one iteration of Algorithm~\ref{alg:BI}:
$O(C^2)$ many OT matrices are computed, and 
each OT matrix costs $O(n^2)$ at each iteration of Algorithm~\ref{alg:T}.
The formation of the cross-covariance matrices $\bbf{C}_b(\bbf{P}_k)$ and $\bbf{C}_w(\bbf{P}_w)$ are $O(d^2n^2)$. 
The other cost is a Krylov subspace eigensolver for SCF in Algorithm~\ref{alg:TR}, whose leading cost is $O(d^2)$ for the matrix-vector products on the $d\times d$ covariance matrices. 
In summary, WDA-nepv (Algorithm~\ref{alg:BI}) is quadratic in the number of data points $n$ and the dimension $d$.

\begin{algorithm}[t]
\caption{WDA-nepv} \label{alg:BI}
\textbf{Input}: Data matrices $\bbf{X}^1\in\RR^{d\times n_1},\ldots,
\bbf{X}^C\in\RR^{d\times n_C}$, regularization $\lambda\geq0$, tolerance tol \\
\textbf{Output}: solution $\bbf{P}$ to NTRopt-WDA \eqref{eq:WDA_tropt}
\begin{algorithmic}[1]
\STATE Create a starting $\bbf{P}_0$ with $\bbf{P}_0^T\bbf{P}_0=I_p$
\FOR{$k=0,1,\ldots$}



\STATE Compute $\bbf{K}^{c,c'}(\bbf{P}_k)\in\RR^{n_c\times n_{c'}}$ with $[\bbf{K}^{c,c'}(\bbf{P}_k)]_{ij}=e^{-\lambda\|\bbf{P}_k^T\bbf{x}_i^c-\bbf{P}_k^T\bbf{x}_j^{c'}\|_2^2}$ and $\bbf{K}^{c,c}(\bbf{P}_k)\in\RR^{n_c\times n_c}$ with $[\bbf{K}^{c,c}(\bbf{P}_k)]_{ij}=e^{-\lambda\|\bbf{P}_k^T\bbf{x}_i^c-\bbf{P}_k^T\bbf{x}_j^{c}\|_2^2}$

\STATE Compute OT matrices $\bbf{T}^{c,c'}(\bbf{P}_k)\in\RR^{n_c\times n_{c'}}$ and 
$\bbf{T}^{c,c}(\bbf{P}_k)\in\RR^{n_c\times n_c}$ by Alg.~\ref{alg:T} with $\bbf{K}^{c,c'}(\bbf{P}_k)$ and $\bbf{K}^{c,c}(\bbf{P}_k)$

\STATE 
Compute $\bbf{C}_b(\bbf{P}_k)$  and $\bbf{C}_w(\bbf{P}_k)$ 
by level-3 BLAS \eqref{eq:CbCwBlas3}

\STATE Compute $\bbf{P}_{k+1}$ by 
       Alg.~\ref{alg:TR} with $\bbf{C}_b(\bbf{P}_k)$ and 
$\bbf{C}_w(\bbf{P}_k)$ 
\IF{$d(\bbf{P}_{k+1},\bbf{P}_k)< tol$}
\STATE Return $\bbf{P}=\bbf{P}_{k+1}$
\ENDIF
\ENDFOR
\end{algorithmic}
\end{algorithm}

\subsection{Convergence of WDA-nepv}

In this section, we provide the convergence analysis of the proposed WDA-nepv. 
We first note that given $\bbf{P}_k$, the inner optimization (Algorithm~\ref{alg:T}) to compute the OT matrices $\bbf{T}^{c,c'}(\bbf{P}_k)$ is globally convergent. 
Furthermore, for applications of WDA, the desired accuracy is typically low, say at the range of $tol = 10^{-3}$ to $10^{-5}$.
Therefore, for simplicity, in our analysis we assume that the OT matrices are accurately computed so that they can be regarded as ``{\em exact}''.
Similarly, the SCF iteration for TRopts (Algorithm~\ref{alg:TR}) in the outer optimization is also globally convergent so that $\bbf{P}_{k+1}$ is accurately computed to be regarded as ``{\em exact}''. Therefore, we have
\begin{equation}\label{eq:SCFcvgd}
    \underbrace{
    \left[\bbf{C}_b(\bbf{P}_k)-\frac {\tr(\bbf{P}_{k+1}^T\bbf{C}_b(\bbf{P}_k)\bbf{P}_{k+1})}{\tr(\bbf{P}_{k+1}^T\bbf{C}_w(\bbf{P}_k)\bbf{P}_{k+1})}\,\bbf{C}_w(\bbf{P}_k)\right]
    }_{=:\widetilde{\bbf{H}}_k}
    \bbf{P}_{k+1}=\bbf{P}_{k+1}\bbf{\Lambda}_k,
\end{equation}
where the eigenvalues of $\bbf{\Lambda}_k$ consist of the $p$ largest eigenvalues of $\widetilde{\bbf{H}}_k$.

Next, we make the following so-called monotonicity assumption whose rigorous verification remains open but is numerically demonstrated throughout all our numerical tests. Subsequently, we introduce the Ky Fan Trace Theorem~\cite{fan1949theorem}, which plays a crucial role in the proof of Theorem~\ref{thm:cvg-NEPv}.

\begin{assumption} \label{assump:mono}
For orthonormal $\bbf{P},\widehat{\bbf{P}}\in\RR^{d\times p}$ and the function $f(\bbf{P})$ defined in~\eqref{eq:WDA_tropt}, if
\begin{equation}\label{eq:mono-assume}
\frac {\tr(\widehat{\bbf{P}}^T\bbf{C}_b(\bbf{P})\widehat{\bbf{P}})}{\tr(\widehat{\bbf{P}}^T\bbf{C}_w(\bbf{P})\widehat {\bbf{P}})}\ge f(\bbf{P})+\eta,
\end{equation}
where $\eta\in\RR$, then 
\begin{equation} \label{eq:mono-assume-b}
f(\widehat{\bbf{P}})\ge f(\bbf{P})+c\eta 
\end{equation}
for some constant $c>0$, independent of $\bbf{P},\widehat{\bbf{P}}$.
\end{assumption}

\begin{theorem}[Ky Fan Trace]\label{thm:kyfan}
    Let $\bbf{A}\in\RR^{d\times d}$ be a symmetric matrix whose eigenvalues are ordered as $\lambda_1(\bbf{A})\geq\lambda_2(\bbf{A})\geq\cdots\geq\lambda_d(\bbf{A})$. Then, for $1\leq p\leq d$, the sum of the $p$ largest eigenvalues of $\bbf{A}$ and the sum of the $p$ smallest eigenvalues of $\bbf{A}$ correspond to the following maximization and the minimization of the trace operator, respectively.
    \begin{align}
        \max_{\bbf{P}^T\bbf{P}=I_p}\tr(\bbf{P}^T\bbf{A}\bbf{P}) &= \sum_{i=1}^p\lambda_i(\bbf{A}), \label{eq:kyfan_max} \\
        \min_{\bbf{P}^T\bbf{P}=I_p}\tr(\bbf{P}^T\bbf{A}\bbf{P}) &= \sum_{i=d-p+1}^d\lambda_i(\bbf{A}). \label{eq:kyfan_min}
    \end{align}
    A solution $\bbf{P}$ satisfies \eqref{eq:kyfan_max} if and only if it is an orthonormal eigenbasis matrix corresponding to the $p$ largest eigenvalues of $\bbf{P}$. Similarly, a solution $\bbf{P}$ satisfies \eqref{eq:kyfan_min} if and only if it is an orthonormal eigenbasis matrix corresponding to the $p$ smallest eigenvalues of $\bbf{P}$.
\end{theorem}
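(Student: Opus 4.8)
The plan is to establish \eqref{eq:kyfan_max} directly via the spectral decomposition and a majorization argument, and then obtain \eqref{eq:kyfan_min} by applying the maximization result to $-\bbf{A}$. First I would write $\bbf{A} = \bbf{Q}\bbf{\Lambda}\bbf{Q}^T$ with $\bbf{Q}$ orthogonal and $\bbf{\Lambda} = \mathcal{D}\big((\lambda_1(\bbf{A}),\ldots,\lambda_d(\bbf{A}))\big)$. For any $\bbf{P}$ with $\bbf{P}^T\bbf{P} = I_p$, set $\bbf{W} = \bbf{Q}^T\bbf{P} \in \RR^{d\times p}$, which also satisfies $\bbf{W}^T\bbf{W} = I_p$. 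Then $\tr(\bbf{P}^T\bbf{A}\bbf{P}) = \tr(\bbf{W}^T\bbf{\Lambda}\bbf{W}) = \sum_{i=1}^d \lambda_i(\bbf{A})\, s_i$, where $s_i := \sum_{j=1}^p W_{ij}^2$ is the squared norm of the $i$th row of $\bbf{W}$. The key structural fact is that the vector $\bbf{s} = (s_1,\ldots,s_d)$ satisfies $0 \le s_i \le 1$ for all $i$ and $\sum_{i=1}^d s_i = p$: the row bound follows because $\bbf{W}$'s rows are rows of a matrix with orthonormal columns (equivalently, $\bbf{W}\bbf{W}^T$ is an orthogonal projector of rank $p$, so its diagonal entries lie in $[0,1]$ and sum to $p$), and the sum is just $\tr(\bbf{W}^T\bbf{W}) = p$.

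Given these constraints, maximizing the linear functional $\sum_i \lambda_i(\bbf{A}) s_i$ over the polytope $\{\bbf{s} : 0 \le s_i \le 1,\ \sum_i s_i = p\}$ is a trivial linear program: since the $\lambda_i(\bbf{A})$ are sorted in decreasing order, the optimum is attained at $s_1 = \cdots = s_p = 1$, $s_{p+1} = \cdots = s_d = 0$, giving the value $\sum_{i=1}^p \lambda_i(\bbf{A})$. This yields the upper bound $\tr(\bbf{P}^T\bbf{A}\bbf{P}) \le \sum_{i=1}^p\lambda_i(\bbf{A})$, and it is achieved by taking $\bbf{P}$ to be the first $p$ columns of $\bbf{Q}$, i.e., an orthonormal eigenbasis for the $p$ largest eigenvalues. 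For the minimization statement \eqref{eq:kyfan_min}, I would apply \eqref{eq:kyfan_max} to $-\bbf{A}$, whose sorted eigenvalues are $-\lambda_d(\bbf{A}) \ge \cdots \ge -\lambda_1(\bbf{A})$; this gives $\max_{\bbf{P}^T\bbf{P}=I_p}\tr(\bbf{P}^T(-\bbf{A})\bbf{P}) = \sum_{i=d-p+1}^d(-\lambda_i(\bbf{A}))$, which after negating is exactly \eqref{eq:kyfan_min}.

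For the equality-case characterization, I would argue that when the relevant eigenvalues are simple the argument above pins down $\bbf{s}$ uniquely at the vertex of the polytope, forcing the column space of $\bbf{W}$ — hence of $\bbf{P}$ — to be the span of the top $p$ coordinate vectors, i.e., $\bbf{P}$ spans the dominant eigenspace; in the degenerate case where $\lambda_p(\bbf{A}) = \lambda_{p+1}(\bbf{A})$ one must be slightly more careful, noting that any maximizer still has its column space contained in the eigenspace of $\bbf{A}$ associated with eigenvalues $\ge \lambda_p(\bbf{A})$, and conversely any orthonormal eigenbasis for the $p$ largest eigenvalues attains the bound. The main obstacle — though a mild one — is exactly this handling of repeated eigenvalues in the "if and only if" clause; the inequality itself is clean. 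Since this is a classical result, I would most likely just cite \cite{fan1949theorem} and omit the detailed proof, as the excerpt's reference suggests.
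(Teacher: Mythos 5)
Your proposal is correct, and in fact the paper offers no proof of this theorem at all --- it simply states it with the citation to Fan's 1949 paper, so your closing instinct to cite and omit the details is exactly what the authors do. Your sketch is the standard argument (spectral decomposition $\bbf{A}=\bbf{Q}\bbf{\Lambda}\bbf{Q}^T$, reduction to a linear program over the polytope $\{\bbf{s}: 0\le s_i\le 1,\ \sum_i s_i=p\}$ via the diagonal of the rank-$p$ projector $\bbf{W}\bbf{W}^T$, and the substitution $\bbf{A}\mapsto -\bbf{A}$ for the minimum), and it is sound, including your caveat that in the degenerate case $\lambda_p(\bbf{A})=\lambda_{p+1}(\bbf{A})$ the ``only if'' direction pins down only the invariant subspace associated with the $p$ largest eigenvalues rather than individual eigenvectors; as a minor aside, the theorem statement's phrase ``eigenvalues of $\bbf{P}$'' is a typo for ``eigenvalues of $\bbf{A}$.''
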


The convergence of WDA-nepv is stated in the following theorem. It provides a theoretical justification of the SCF framework for solving NTRopt-WDA~\eqref{eq:WDA_tropt}.

\begin{theorem}\label{thm:cvg-NEPv}
Suppose Assumption~\eqref{assump:mono} holds and that the sum of the $p$ smallest eigenvalues of $\bbf{C}_w(\bbf{P})$ is uniformly bounded below for all orthonormal $\bbf{P}\in\RR^{d\times p}$. Similarly, suppose that the sum of the $p$ largest eigenvalues of $\bbf{C}_w(\bbf{P})$ is uniformly bounded above for all orthonormal $\bbf{P}\in\RR^{d\times p}$.
Let the sequence $\{\bbf{P}_k\}_{k=0}^{\infty}$ be generated by WDA-nepv (Algorithm~\ref{alg:BI}).
The following statements hold.
\begin{enumerate}[label=(\alph*)]
    \item $\{f(\bbf{P}_k)\}_{k=0}^{\infty}$ is monotonically increasing and convergent.
    
    \item $\{\bbf{P}_k\}_{k=0}^{\infty}$ has a convergent subsequence $\{\bbf{P}_k\}_{k\in\mathbb{I}}$, converging to $\bbf{P}_*$, and
            $$
            \lim_{k\to\infty} f(\bbf{P}_k)=f(\bbf{P}_*).
            $$
    
    \item $\bbf{P}_*$ is an orthonormal eigenbasis matrix of $\bbf{H}_*:=\bbf{H}(\bbf{P}_*)=\bbf{C}_b(\bbf{P}_*)-f(\bbf{P}_*)\,\bbf{C}_w(\bbf{P}_*)$ associated with its $p$ largest eigenvalues.
    \item If $\lambda_p(\bbf{H}_*)-\lambda_{p+1}(\bbf{H}_*)>0$, then
        \begin{equation}\label{eq:P*}
            \bbf{P}_*=\arg\max_{\bbf{P}^T\bbf{P}=I_p}\frac {\tr(\bbf{P}^T\bbf{C}_b(\bbf{P}_*)\bbf{P})}{\tr(\bbf{P}^T\bbf{C}_w(\bbf{P}_*)\bbf{P})},
        \end{equation}
        where $\lambda_i(\bbf{H}_*)$ is the $i^{\mbox{th}}$ largest eigenvalues of $\bbf{H}_*$.
\end{enumerate}
\end{theorem}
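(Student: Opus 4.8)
The plan is to establish the four statements in sequence, each feeding into the next, exactly along the lines of a standard SCF-convergence argument but adapted to the bi-level structure. For part (a), I would combine the definition of $\bbf{P}_{k+1}$ as the exact TRopt maximizer at the frozen $\bbf{P}_k$ with Assumption~\ref{assump:mono}. Since $\bbf{P}_{k+1}$ maximizes the trace ratio with covariances $\bbf{C}_b(\bbf{P}_k),\bbf{C}_w(\bbf{P}_k)$, and $\bbf{P}_k$ itself is a feasible point of that same problem, we get $\tr(\bbf{P}_{k+1}^T\bbf{C}_b(\bbf{P}_k)\bbf{P}_{k+1})/\tr(\bbf{P}_{k+1}^T\bbf{C}_w(\bbf{P}_k)\bbf{P}_{k+1}) \ge f(\bbf{P}_k)$, i.e.\ \eqref{eq:mono-assume} holds with $\eta = \eta_k \ge 0$. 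Assumption~\ref{assump:mono} then yields $f(\bbf{P}_{k+1}) \ge f(\bbf{P}_k) + c\,\eta_k \ge f(\bbf{P}_k)$, so $\{f(\bbf{P}_k)\}$ is monotonically increasing. Boundedness above comes from the hypotheses: the numerator $\tr(\bbf{P}^T\bbf{C}_b(\bbf{P})\bbf{P})$ is bounded (each $\bbf{C}^{c,c'}$ is a fixed finite sum of outer products scaled by transport weights that sum to $1$, so $\bbf{C}_b$ is bounded in norm uniformly in $\bbf{P}$), and the denominator $\tr(\bbf{P}^T\bbf{C}_w(\bbf{P})\bbf{P}) \ge \sum_{i=d-p+1}^d \lambda_i(\bbf{C}_w(\bbf{P}))$ is bounded below away from $0$ by the uniform-lower-bound hypothesis (using Ky Fan, Theorem~\ref{thm:kyfan}). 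Hence $f(\bbf{P}_k)$ is a bounded monotone sequence, therefore convergent.

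For part (b), the Stiefel manifold $\{\bbf{P}: \bbf{P}^T\bbf{P}=I_p\}$ is compact, so $\{\bbf{P}_k\}$ has a convergent subsequence $\{\bbf{P}_k\}_{k\in\mathbb{I}} \to \bbf{P}_*$. Because $\bbf{C}_b(\cdot)$ and $\bbf{C}_w(\cdot)$ depend continuously on $\bbf{P}$ (the OT matrices depend continuously on $\bbf{P}$ via the cost matrix and Theorem~\ref{thm:Tlambda}, being unique solutions of a strictly convex problem), $f$ is continuous, so $f(\bbf{P}_k)_{k\in\mathbb{I}} \to f(\bbf{P}_*)$; but the whole sequence $f(\bbf{P}_k)$ converges by (a), so $\lim_{k\to\infty} f(\bbf{P}_k) = f(\bbf{P}_*)$. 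For part (c), I would pass to the limit in \eqref{eq:SCFcvgd} along $\mathbb{I}$: from convergence of $f(\bbf{P}_k)$ and the relation $\eta_k = \big(\tr(\bbf{P}_{k+1}^T\bbf{C}_b(\bbf{P}_k)\bbf{P}_{k+1})/\tr(\bbf{P}_{k+1}^T\bbf{C}_w(\bbf{P}_k)\bbf{P}_{k+1})\big) - f(\bbf{P}_k)$ together with $f(\bbf{P}_{k+1}) \ge f(\bbf{P}_k) + c\,\eta_k$, the monotone convergence of $f(\bbf{P}_k)$ forces $\eta_k \to 0$, so the Rayleigh-type quotient defining $\widetilde{\bbf{H}}_k$ tends to $f(\bbf{P}_*)$. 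One also needs $\bbf{P}_{k+1} \to \bbf{P}_*$ along a suitable index set — this is the delicate point (see below). Granting that, $\widetilde{\bbf{H}}_k \to \bbf{H}_* = \bbf{C}_b(\bbf{P}_*) - f(\bbf{P}_*)\bbf{C}_w(\bbf{P}_*)$ and the eigenvector relation \eqref{eq:SCFcvgd} passes to the limit, so $\bbf{P}_*$ is an orthonormal eigenbasis of $\bbf{H}_*$ for its $p$ largest eigenvalues (the "$p$ largest" property survives the limit by continuity of eigenvalues). Part (d) is then a direct application of the known characterization of TRopt solutions: when the spectral gap $\lambda_p(\bbf{H}_*) - \lambda_{p+1}(\bbf{H}_*) > 0$, the NEPv \eqref{eq:TR_NEPv} with fixed pair $(\bbf{C}_b(\bbf{P}_*),\bbf{C}_w(\bbf{P}_*))$ has a unique (up to orthogonal transformation) $p$-largest-eigenvector solution, and by the result of \cite{zhang2014note} this solution is exactly the global TRopt maximizer in \eqref{eq:P*}; since $\bbf{P}_*$ satisfies the NEPv by (c) with quotient value $f(\bbf{P}_*)$, it is that maximizer.

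The main obstacle I anticipate is the step in (c) where one must align $\bbf{P}_{k+1}$ with the subsequential limit $\bbf{P}_*$. A priori, $\{\bbf{P}_k\}_{k\in\mathbb{I}} \to \bbf{P}_*$ only controls $\bbf{P}_k$ for $k\in\mathbb{I}$, not $\bbf{P}_{k+1}$; one wants to say that either $\{k+1 : k\in\mathbb{I}\}$ also has a limit point equal to $\bbf{P}_*$, or that $d(\bbf{P}_{k+1},\bbf{P}_k)\to 0$. The latter does not obviously follow just from $f(\bbf{P}_k)$ converging — that is precisely why Assumption~\ref{assump:mono} (with the strict improvement constant $c>0$) is invoked: it gives $\eta_k\to 0$, which says the frozen-problem optimal value at $\bbf{P}_{k+1}$ approaches $f(\bbf{P}_k)\to f(\bbf{P}_*)$, i.e.\ $\bbf{P}_{k+1}$ is asymptotically an optimal solution of the limiting TRopt. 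Under the gap condition in (d) this optimal solution is essentially unique, which closes the loop; without it, one argues at the level of the value $f(\bbf{P}_*)$ and the eigenbasis property (c) rather than pointwise convergence of $\bbf{P}_{k+1}$. I would handle this carefully by working with subsequential limits of the pair $(\bbf{P}_k,\bbf{P}_{k+1})$ on the compact product of Stiefel manifolds and using continuity of all the maps involved, and I expect this bookkeeping — rather than any single hard inequality — to be where the real work lies.
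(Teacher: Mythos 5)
Your overall strategy coincides with the paper's: monotonicity from Assumption~\ref{assump:mono} plus Theorem~\ref{thm:kyfan} for (a), compactness of the Stiefel manifold and continuity of $f$ for (b), passage to the limit in \eqref{eq:SCFcvgd} along a sub-subsequence of the pairs $(\bbf{P}_k,\bbf{P}_{k+1})$ for (c), and the spectral gap for (d). The one step you flag but leave unresolved is indeed the crux, so here is how it closes. Extracting $\mathbb{I}_1\subset\mathbb{I}$ with $\bbf{P}_{k+1}\to\widehat{\bbf{P}}_*$ and taking limits in \eqref{eq:SCFcvgd} only shows that $\widehat{\bbf{P}}_*$ --- not $\bbf{P}_*$ --- is an eigenbasis of $\widetilde{\bbf{H}}_*$ for the $p$ largest eigenvalues, and one cannot force $\widehat{\bbf{P}}_*=\bbf{P}_*$ in general. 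The paper transfers the property to $\bbf{P}_*$ by a trace comparison: $\tr(\widehat{\bbf{P}}_*^T\widetilde{\bbf{H}}_*\widehat{\bbf{P}}_*)=0$ is the Ky Fan maximum, and if $\bbf{P}_*$ failed to be such an eigenbasis then $\eta:=\tr(\widehat{\bbf{P}}_*^T\widetilde{\bbf{H}}_*\widehat{\bbf{P}}_*)-\tr(\bbf{P}_*^T\widetilde{\bbf{H}}_*\bbf{P}_*)>0$, and a continuity argument together with Assumption~\ref{assump:mono} (the constant $c>0$ turning an improvement of order $\eta/\omega$ at some $\bbf{P}_{k_0}$ into $f(\bbf{P}_{k_0+1})>f(\bbf{P}_*)$) contradicts $f(\bbf{P}_k)\le f(\bbf{P}_*)$ for all $k$. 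Your observation that $c\,\eta_k\le f(\bbf{P}_{k+1})-f(\bbf{P}_k)\to 0$ forces $\eta_k\to 0$ is a valid and slightly more direct substitute for that contradiction: it gives $\tr(\widehat{\bbf{P}}_*^T\bbf{C}_b(\bbf{P}_*)\widehat{\bbf{P}}_*)/\tr(\widehat{\bbf{P}}_*^T\bbf{C}_w(\bbf{P}_*)\widehat{\bbf{P}}_*)=f(\bbf{P}_*)$, hence $\widetilde{\bbf{H}}_*=\bbf{H}_*$, and then $\tr(\bbf{P}_*^T\bbf{H}_*\bbf{P}_*)=0$ holds identically from the definition of $f(\bbf{P}_*)$ and equals the Ky Fan maximum, so the equality case of Theorem~\ref{thm:kyfan} yields (c) for $\bbf{P}_*$ without ever needing $\bbf{P}_{k+1}\to\bbf{P}_*$. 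With that one observation added your argument is complete and matches the paper's in all essentials; in (d) the gap condition is used exactly as you say, to identify $\widehat{\bbf{P}}_*=\bbf{P}_*\bbf{Q}$ and hence $\bbf{P}_*$ with the maximizer of the frozen trace ratio.
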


\begin{proof}
    \begin{enumerate}[label=(\alph*)]
        \item Recall that the projection $\bbf{P}_{k+1}$ is computed exactly to form an orthonormal eigenbasis matrix corresponding to the $p$ largest eigenvalues of $\widetilde{\bbf{H}}_k$. Therefore, by Theorem~\ref{thm:kyfan} $\bbf{P}_{k+1}$ is a solution to the following optimization:
        \begin{equation}\label{eq:kyfan}
            \max_{\bbf{P}^T\bbf{P}=I_p}\tr(\bbf{P}^T\widetilde{\bbf{H}}_k\bbf{P}),
        \end{equation}
        and we have from \eqref{eq:SCFcvgd} that
        \[0=\tr(\bbf{P}_{k+1}^T\widetilde{\bbf{H}}_k\bbf{P}_{k+1})=\tr(\bbf{\Lambda}_k)\geq\tr(\bbf{P}_k^T\widetilde{\bbf{H}}_k\bbf{P}_k),\]
        which yields \eqref{eq:mono-assume} with $\bbf{P}=\bbf{P}_k$, $\widehat{\bbf{P}}=\bbf{P}_{k+1}$, and $\eta=0$. Hence, by assumption~\ref{assump:mono}, we conclude $f(\bbf{P}_{k+1})\geq f(\bbf{P}_k)$.
    
        According to~\eqref{eq:kyfan_min}, we know that $\tr(\bbf{P}^T\bbf{C}_w(\bbf{P})\bbf{P})$ is not smaller than the sum of the $p$ smallest eigenvalues of $\bbf{C}_w(\bbf{P})$. Since the sum of the $p$ smallest eigenvalues of $\bbf{C}_w(\bbf{P})$ is assumed to be uniformly bounded from below for any orthonormal $\bbf{P}$, $\{f(\bbf{P}_k)\}_{k=0}^{\infty}$ is monotonically increasing and bounded and hence convergent.
    
        \item Since $\{\bbf{P}_k\}_{k=0}^{\infty}$ is a bounded sequence, it has a convergent subsequence $\{\bbf{P}_k\}_{k\in\mathbb{I}}$, converging to an orthonormal $\bbf{P}_*$. As $f(\bbf{P})$ is continuous with respect to $\bbf{P}$, by item (a), we have
        \[\lim_{k\to\infty} f(\bbf{P}_k)=\lim_{\mathbb{I}\ni k\to\infty} f(\bbf{P}_k)=f(\lim_{\mathbb{I}\ni k\to\infty} \bbf{P}_k)=f(\bbf{P}_*).\]
    
        \item Let $\omega$ be the uniform upper bound of the sum of the $p$ largest eigenvalues of $\bbf{C}_w(\bbf{P})$.
    
        Recall that we have~\eqref{eq:SCFcvgd}. $\{\bbf{P}_{k+1}\}_{k\in\mathbb{I}}$, as a bounded sequence, has a convergent subsequence $\{\bbf{P}_{k+1}\}_{k\in\mathbb{I}_1}$, converging to an orthonormal $\widehat{\bbf{P}}_*$, where $\mathbb{I}_1\subset\mathbb{I}$. Letting $\mathbb{I}_1\ni k\to\infty$ in \eqref{eq:SCFcvgd} yields
        \[\underbrace{\left[\bbf{C}_b(\bbf{P}_*)-\frac {\tr(\widehat{\bbf{P}}_*^T\bbf{C}_b(\bbf{P}_*)\widehat{\bbf{P}}_*)}{\tr(\widehat{\bbf{P}}_*^T\bbf{C}_w(\bbf{P}_*)\widehat{\bbf{P}}_*)}\,\bbf{C}_w(\bbf{P}_*)\right]}_{=:\widetilde{\bbf{H}}_*}\widehat{\bbf{P}}_*=\widehat{\bbf{P}}_*{\Lambda}_*,\]
        where the eigenvalues of $\bbf{\Lambda}_*$ consist of the $p$ largest eigenvalues of $\widetilde{\bbf{H}}_*$. 
        We claim that $\bbf{P}_*$ is an orthonormal eigenbasis matrix of $\widetilde{\bbf{H}}_*$ associated with its $p$ largest eigenvalues as well, just like $\widehat{\bbf{P}}_*$ is.
        Otherwise, according to~\eqref{eq:kyfan_max}, 
        \[\eta=\tr(\widehat{\bbf{P}}_*^T\widetilde{\bbf{H}}_*\widehat{\bbf{P}}_*)-\tr(\bbf{P}_*^T\widetilde{\bbf{H}}_*\bbf{P}_*)>0,\]
        yielding
        \[\frac {\tr(\widehat{\bbf{P}}_*^T\bbf{C}_b(\bbf{P}_*)\widehat{\bbf{P}}_*)}{\tr(\widehat{\bbf{P}}_*^T\bbf{C}_w(\bbf{P}_*)\widehat{\bbf{P}}_*)}=f(\bbf{P}_*)+\frac {\eta}{\tr(\bbf{P}_*^T\bbf{C}_w(\bbf{P}_*)\bbf{P}_*)}\geq f(\bbf{P}_*)+\frac {\eta}{\omega}.\]
        By continuity, there is an $k_0\in\mathbb{I}_1$ such that
        \[\frac {\tr(\bbf{P}_{k_0+1}^T\bbf{C}_b(\bbf{P}_{k_0})\bbf{P}_{k_0+1})}{\tr(\bbf{P}_{k_0+1}^T\bbf{C}_w(\bbf{P}_{k_0})\bbf{P}_{k_0+1})}\geq \frac {\tr(\widehat{\bbf{P}}_*^T\bbf{C}_b(\bbf{P}_*)\widehat{\bbf{P}}_*)}{\tr(\widehat{\bbf{P}}_*^T\bbf{C}_w(\bbf{P}_*)\widehat{\bbf{P}}_*)}-\frac 13\frac {\eta}{\omega},\,\,\]
        and
        \[f(\bbf{P}_{k_0})\geq f(\bbf{P}_*)-\frac 13 \frac {c\eta}{\omega}.\]
        Therefore,
        \[\frac {\tr(\bbf{P}_{k_0+1}^T\bbf{C}_b(\bbf{P}_{k_0})\bbf{P}_{k_0+1})}{\tr(\bbf{P}_{k_0+1}^T\bbf{C}_w(\bbf{P}_{k_0})\bbf{P}_{k_0+1})}\geq f(\bbf{P}_*)+\frac {\eta}{\omega}-\frac 13 \frac {\eta}{\omega}=f(\bbf{P}_*)+\frac 23 \frac {\eta}{\omega}\geq f(\bbf{P}_{k_0})+\frac 23 \frac {\eta}{\omega}.\]
        By assumption~\ref{assump:mono}, we get
        \[f(\bbf{P}_{k_0+1})\geq f(\bbf{P}_{k_0})+c\frac 23\frac {\eta}{\omega}\geq f(\bbf{P}_*)-\frac 13 \frac {c\eta}{\omega}+c\frac 23\frac {\eta}{\omega}=f(\bbf{P}_*)+\frac 13 \frac {c\eta}{\omega}>f(\bbf{P}_*),\]
        contradicting all $f(\bbf{P}_k)\le f(\bbf{P}_*)$.
    
        The contradiction indicates that $\bbf{P}_*$ is an orthonormal eigenbasis matrix of $\widetilde{\bbf{H}}_*$ associated with its $p$ largest eigenvalues, and hence $\eta=0$, i.e.,
        \[\frac {\tr(\widehat{\bbf{P}}_*^T\bbf{C}_b(\bbf{P}_*)\widehat{\bbf{P}}_*)}{\tr(\widehat{\bbf{P}}_*^T\bbf{C}_w(\bbf{P}_*)\widehat{\bbf{P}}_*)}=f(\bbf{P}_*),\]
        implying $\widetilde{\bbf{H}}_*=\bbf{H}_*$. Therefore, in conclusion, $\bbf{P}_*$ is an orthonormal eigenbasis matrix of $\bbf{H}_*$ associated with its $p$ largest eigenvalues.
    
        \item In proving item (c), we concluded that $\widetilde{\bbf{H}}_*=\bbf{H}_*$ and so we have $\bbf{H}_*\widehat{\bbf{P}}_*=\widehat{\bbf{P}}_*\bbf{\Lambda}_*$ upon letting $\mathbb{I}_1\ni k\to\infty$ in \eqref{eq:SCFcvgd}, where the eigenvalues of $\bbf{\Lambda}_*$ consist of the $p$ largest eigenvalue of $\bbf{H}_*$.
        By item (c), $\bbf{P}_*$ and $\widehat{\bbf{P}}_*$ are two orthonormal eigenbasis matrices of $\bbf{H}_*$ associated with its $p$ largest eigenvalues. By the assumption that $\bbf{\lambda}_p(\bbf{H}_*)-\bbf{\lambda}_{p+1}(\bbf{H}_*)>0$, we conclude $\widehat{\bbf{P}}_*=\bbf{P}_*\bbf{Q}$ where $\bbf{Q}\in\RR^{p\times p}$ is an orthogonal matrix. Hence, we have
        \[\bbf{P}_*=\arg\max_{\bbf{P}^T\bbf{P}=I_p}\frac {\tr(\bbf{P}^T\bbf{C}_b(\bbf{P}_*)\bbf{P})}{\tr(\bbf{P}^T\bbf{C}_w(\bbf{P}_*)\bbf{P})}.\]
    \end{enumerate}
\end{proof}


\section{Numerical Experiments}\label{sec:num}

In this section, we demonstrate the convergence, classification accuracy, and scalability of WDA-nepv. The outline of the section is as follows: first, we provide a brief summary of datasets that we perform numerical experiments on. Then, we demonstrate the convergence behaviors of WDA-nepv, followed by a discussion on non-optimal convergence.
Using the K-Nearest-Neighbors (KNN) algorithm to measure the classification accuracy, we demonstrate that WDA-nepv performs either competitively or better than WDA-gd and WDA-eig. Finally, we demonstrate that WDA-nepv scales linearly in subspace dimension $p$, and quadratically in dimension $d$ and in number of data points $n$.

The experiments were conducted using Python on a PC with an Intel Core i7-7500U processor with 16GB of RAM.
To advocate for reproducible research, we share our Python implementation of WDA-nepv for the experiment results presented in this paper\footnote{Github page for WDA-nepv: \url{https://github.com/gnodking7/WDAnepv}}. The codes of WDA-gd and WDA-eig are provided by their respective authors\footnote{Code for WDA-gd: \url{https://pythonot.github.io/auto_examples/others/plot_WDA.html}. \newline
Github page for WDA-eig: \url{https://github.com/HexuanLiu/WDA_eig}}.
For all experiments, the initial projection $\bbf{P}_0$ is chosen as a random orthogonal matrix. Unless otherwise stated, the stopping tolerance parameter is preset at $10^{-5}$. 

\subsection{Datasets}\label{subsec:data}

\paragraph{Synthetic dataset.}
We consider the synthetic dataset used in \cite{flamary2018wasserstein,liu2020ratio}.
The dataset consists of three bi-modal classes such that the corresponding three data matrices are
\begin{equation}\label{eq:syn_data}
    \bbf{X}^1\in\RR^{d\times n_1},\quad \bbf{X}^2\in\RR^{d\times n_2},\quad \bbf{X}^3\in\RR^{d\times n_3}.\quad 
\end{equation}
Each data point is a $d$ dimensional vector whose first two components are discriminative and the remaining components are Gaussian noise, i.e., drawn from the standard normal distribution $\mathcal{N}(0,1)$. In particular, each class consists of two separate modes in its discriminative components such that the number of data points is equally split among the two modes.

The discriminative behavior is shown in Figure~\ref{fig:WDA_Synth} for an example $n_1=30, n_2=40, n_3=30$: the left subplot shows the first two components and the right subplot the next two components.

\begin{figure}[H]
\centering
\includegraphics[width=1.0\textwidth]{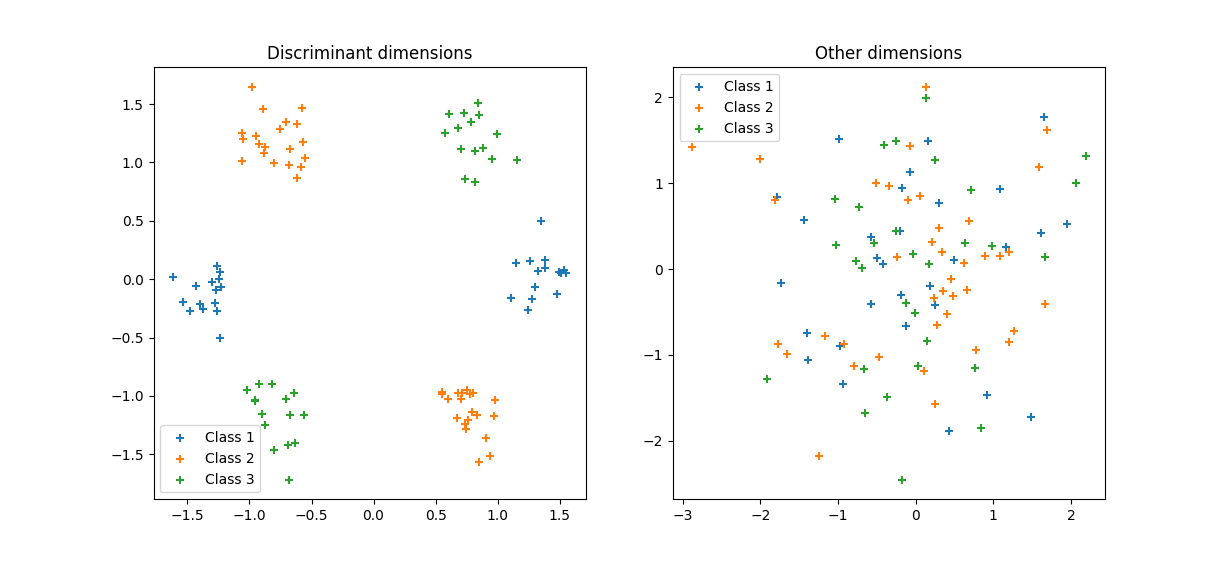}
\caption{Each of the three classes is displayed in a different color. The left figure is the plot of the first two components of the data points, and the right figure is the plot of the next two components.} \label{fig:WDA_Synth}
\end{figure}

\paragraph{Synthetic Shape dataset.}
We consider various synthetic 2 dimensional datasets from \cite{ClusteringDatasets} that follow specific patterns and shapes: Jain \cite{jain2005data}, Flame \cite{fu2007flame}, Pathbased \cite{chang2008robust}, Compound \cite{zahn1971graph}, Aggregation \cite{gionis2007clustering}, R15 \cite{veenman2002maximum}. In order to test an algorithm's ability to uncover the discriminating features we append these datasets with 8 additional components drawn from the standard normal distribution $\bbf{N}(0,1)$. Their first two components, i.e., the discriminating features, are plotted in Figure~\ref{fig:WDA_Synth_shapes}.

\begin{figure}[h]
\centering
\includegraphics[width=1\textwidth]{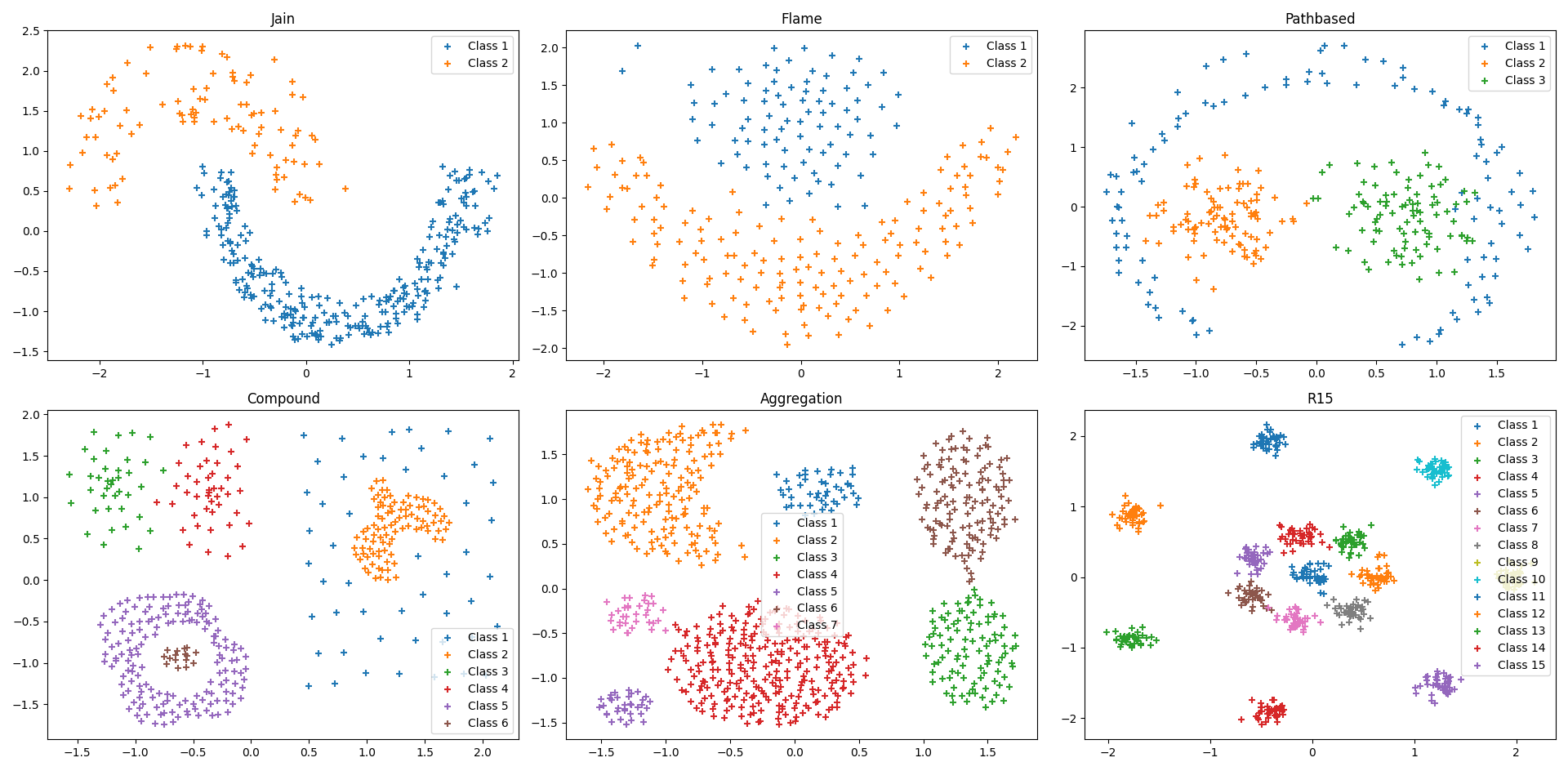}
\caption{Plots of the first two components of shape sets that are discriminatory.} \label{fig:WDA_Synth_shapes}
\end{figure}

\paragraph{UCI Datasets.}
The UCI Repository \cite{Dua:2019} is a collection of datasets that are widely used by the machine learning community for the empirical analysis of machine learning algorithms. In particular, the repository provides a wide variety of datasets that are suitable for clustering and classification tasks. Among the datasets, we choose the real-life datasets named Iris, Wine, Ionosphere, LSVT, and Parkinson's Disease. In Table~\ref{tab:UCI}, their dimension size, number of data points, and number of classes are displayed.


\begin{table}[H]
\centering 
\begin{tabular}{c|ccccc}
 & \textbf{Iris} & \textbf{Wine} & \textbf{Ionosphere} & \textbf{LSVT} & \textbf{Parkinson} \\ \hline
\textbf{Dimension ($d$)} & 4 & 13 & 34 & 309 & 754 \\
\textbf{Data points ($\sum n_c$)} & 150 & 178 & 351 & 126 & 756 \\
\textbf{Classes ($C$)} & 3 & 3 & 2 & 2 & 2
\end{tabular}
\caption{UCI datasets description.}
\label{tab:UCI}
\end{table}

\subsection{WDA-nepv: Convergence Behavior}\label{subsec:conv}

\begin{example} \label{eg:conv1} 
{\rm 
To investigate the convergence behavior of WDA-nepv, we use a synthetic dataset with a dimension of $d=10$ and three classes of data points of sizes $(n_1,n_2,n_3)=(30,40,30)$. We perform two experiments to investigate the convergence behavior of WDA-nepv: 
\begin{itemize}
    \item With fixed regularization parameter $\lambda=0.01$, the convergence behavior of WDA-nepv is depicted for various subspace dimensions $p\in\{1,2,3,4,5\}$; see Figure~\ref{fig:WDA_Synth_Conv}(a).
    \item With fixed subspace dimension $p=2$, the convergence behavior of WDA-nepv is reported for various regularization parameters $\lambda\in\{0.001,0.01,0.1\}$; see Figure~\ref{fig:WDA_Synth_Conv}(b).
\end{itemize}

\begin{figure}[H]
\centering
\subfloat[Varying $p$]{{\includegraphics[width=0.45\textwidth]{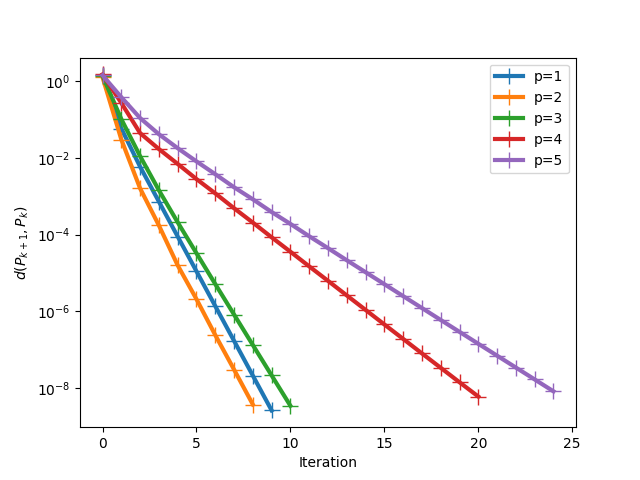} }}%
\quad
\subfloat[Varying $\lambda$]{{\includegraphics[width=0.45\textwidth]{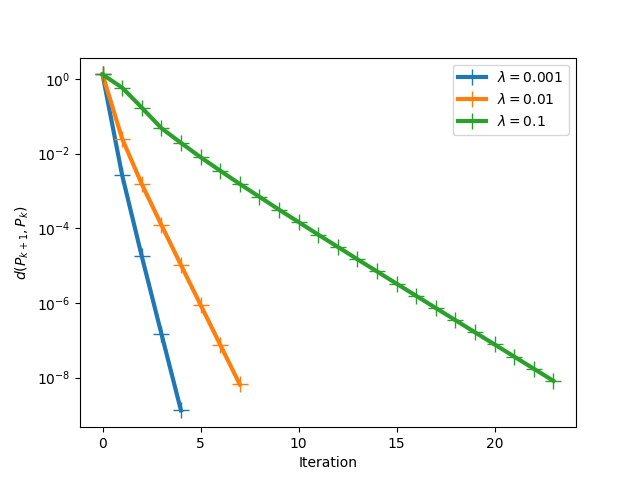} }}%
\caption{Convergence behavior of WDA-nepv on the synthetic dataset.}
\label{fig:WDA_Synth_Conv}
\end{figure}

Both plots in Figure~\ref{fig:WDA_Synth_Conv} illustrate linear convergence for WDA-nepv regardless of the choice of the regularization parameter or the subspace dimension. 
WDA-nepv achieves the fastest convergence rate for the subspace dimension $p=2$, the dimension size of the true discriminative subspace of the dataset. 
For a similar subspace dimension size $p=1$ WDA-nepv achieves a similar convergence rate as $p=2$, while as the subspace dimension $p$ increases the convergence rate slows down. 
We also observe that the convergence rate slows down as the regularization parameter $\lambda$ increases. When $\lambda$ is small, NTRopt-WDA~\eqref{eq:WDA_tropt} can be considered as LDA with a small perturbation to the cross-covariance matrices. In this case, as LDA is known to have a local quadratic convergence \cite{cai2018eigenvector}, WDA-nepv converges faster than the cases where $\lambda$ is larger.

}
\end{example}

\subsubsection{Many Local Maxima and Non-optimal Convergence}

\paragraph{Many local maxima.}
NTRopt-WDA~\eqref{eq:WDA_tropt} is a highly nonlinear, non-convex bi-level optimization that often has many local optimizers. The following example illustrates this.

\begin{example}\label{ex:local}
{\rm
Let us consider the Synthetic dataset with $d=2$ and the number of data points $n_1=30, n_2=40, n_3=30$. 
We desire to compute a projection $\bbf{P}$  that projects these data points onto a subspace of dimension $p=1$, i.e., $\bbf{P}$ is a normalized vector $\bbf{p}\in\RR^{2 \times 1}$.
We define $f(\bbf{p})$ as the value of the objective function of NTRopt-WDA at a normalized vector $\bbf{p}$. That is,
\begin{eqnarray}\label{eq:obj_vec}
    f(\bbf{p}):=\frac{\tr(\bbf{p}^T\bbf{C}_b(\bbf{p})\bbf{p})}{\tr(\bbf{p}^T\bbf{C}_w(\bbf{p})\bbf{p})}.
\end{eqnarray}
Figure~\ref{fig:WDA_Synth_3D} plots the values $f(\bbf{p})$ (gray points) at $2000$ random normalized vectors $\bbf{p}$, where the regularization parameter is chosen as $\lambda=0.1$ for plot (a) and $\lambda=1$ for plot (b). 
We observe that for both regularization parameters, there are many local maxima.

\begin{figure}[h]
\centering
\subfloat[$\lambda=0.1$]
{{\includegraphics[width=0.5\textwidth]{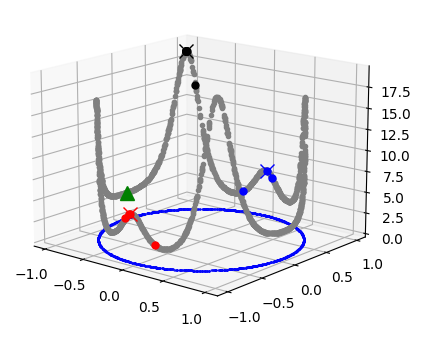} }}%
\subfloat[$\lambda=1$]
{{\includegraphics[width=0.5\textwidth]{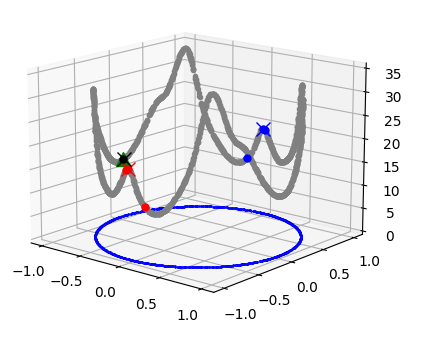} }}%
\caption{Plots of NTRopt-WDA values $f(\bbf{p}_k)$ (gray points) at random normalized vectors, with regularization parameter $\lambda=0.1$ in (a) and $\lambda=1$ in (b). For all three considered algorithms, the initial projection $\bbf{p}_0$ (a green triangle) is chosen as a local minimizer. The values $f(\bbf{p}_k)$ of WDA-nepv (red circles), WDA-gd (black circles), and WDA-eig (blue circles) are displayed for the projections $\bbf{p}_k$ obtained at each iteration of an algorithm. For each algorithm, the value $f(\bbf{p}_*)$ for the converged projection $\bbf{p}_*$ is displayed as the `$\times$' symbol.}
\label{fig:WDA_Synth_3D}
\end{figure}
}
\end{example}

\paragraph{Non-optimal convergence.}

Finding a global optimizer of a non-convex problem is NP-hard. Algorithms for non-convex problems are prone to converge towards a local optimizer, and consequently, lead to a suboptimal solution. A suboptimal solution is not desirable in optimization and for the task at hand. For instance, in a classification task, relying on a suboptimal projection often proves insufficient for identifying the subspace that effectively discriminates the projected data points.

\begin{example}\label{ex:conv2}
{\rm
In order to analyze the convergence behaviors of WDA-gd, WDA-eig and WDA-nepv, we examine the history of their NTRopt-WDA values for the problem described in Example~\ref{ex:local}. That is, for projections $\bbf{p}_k$ obtained at each iteration of an algorithm, we examine how the values $f(\bbf{p}_k)$~\eqref{eq:obj_vec} are changing. Figure~\ref{fig:WDA_Synth_3D} illustrates the local convergence behavior for WDA-nepv, WDA-gd, and WDA-eig.

For both regularization parameters $\lambda\in\{0.1,1\}$, the same initial projection $\bbf{p}_0$ corresponding to a local minimizer is used and shared by all three algorithms. In the figure, the value $f(\bbf{p}_0)$ is represented as a green triangle, while the values $f(\bbf{p}_k)$ are shown as red circles for WDA-nepv, black circles for WDA-gd, and blue circles for WDA-eig. The final value $f(\bbf{p}_*)$ for the converged projection $\bbf{p}_*$ is displayed as the `$\times$' symbol, with each color corresponding to the respective algorithm.

In Figure~\ref{fig:WDA_Synth_3D}(a),
we observe that for a regularization parameter $\lambda=0.1$, WDA-nepv and WDA-eig converge towards a local maximizer while WDA-gd successfully finds a global maximizer. 
However, in Figure~\ref{fig:WDA_Synth_3D}(b), when $\lambda=1$,
we also observe that WDA-gd fails to find a global maximizer and remains trapped in the initial local minimizer.
}
\end{example}

\begin{example}\label{ex:conv3}
{\rm
In addition, we use the UCI datasets Wine and Iris to further illustrate the convergence towards non-global local optimizers.
We set the regularization parameter as $\lambda=0.01$ for the Wine dataset and $\lambda=1$ for the Iris dataset.
For both datasets, we solve the NTRopt-WDA problem for subspace dimensions $p\in\{2,3\}$. The history of the NTRopt-WDA values, denoted as $f(\bbf{P}_k)$, is plotted in Figure~\ref{fig:WDA_WINE_IRIS_Obj}. All three algorithms share the same initial projection $\bbf{P}_0$, resulting in the same value of $f(\bbf{P}_0)$.

\begin{figure}[h]
    \centering
    \includegraphics[width=\linewidth]{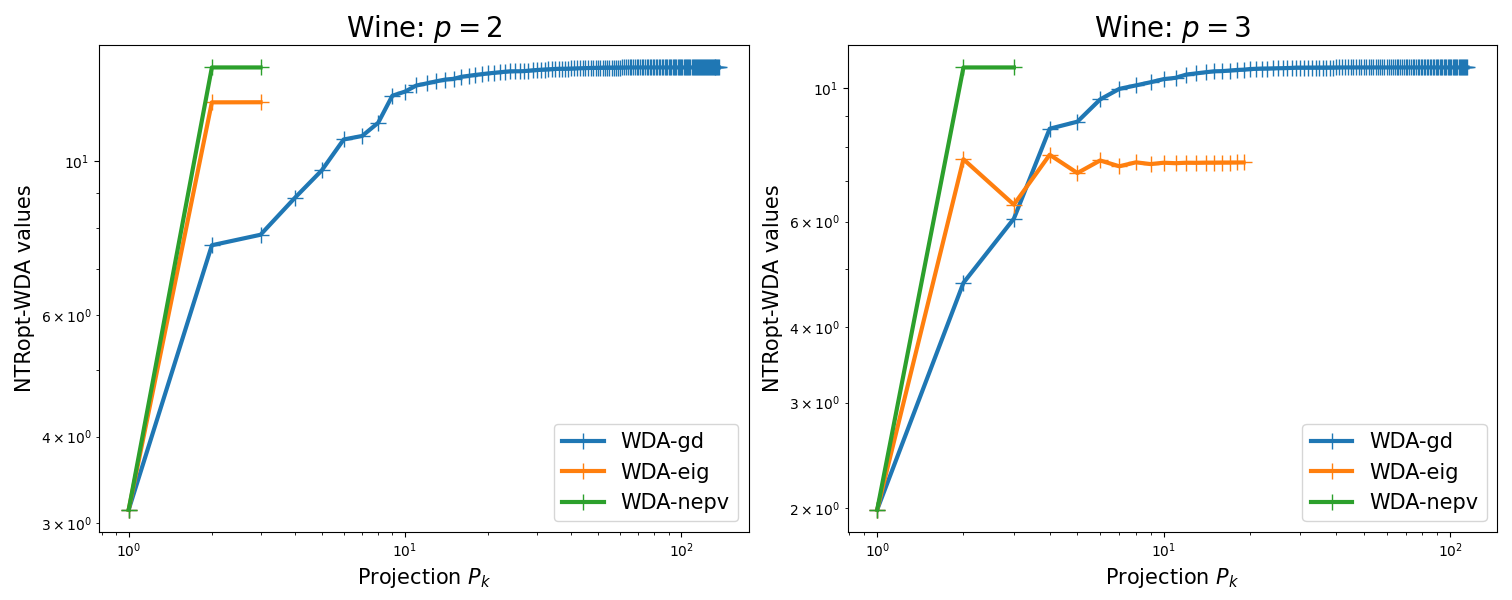}
    \includegraphics[width=\linewidth]{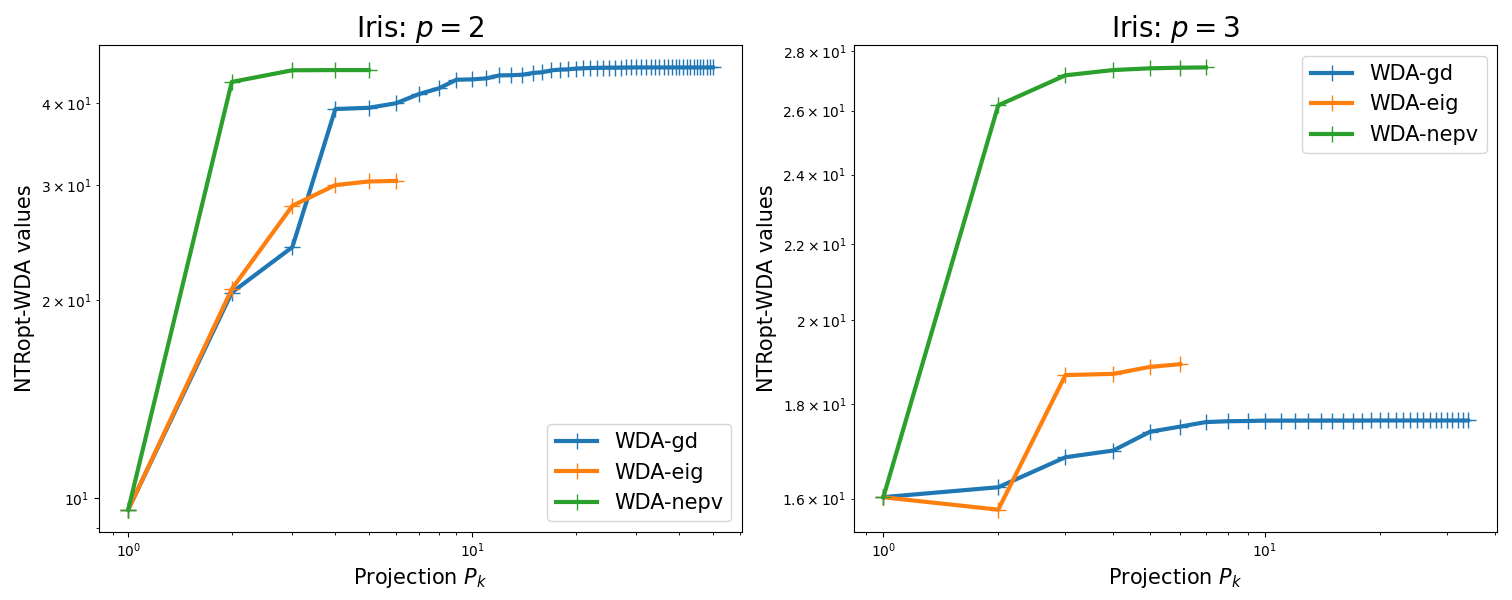}
    \caption{History of the NTRopt-WDA values for Wine and Iris. Subspace dimensions $p\in\{1,2,3\}$ are considered and regularization parameters are set at $\lambda=0.01$ and $\lambda=1$ for Wine and Iris, respectively.}
    \label{fig:WDA_WINE_IRIS_Obj}
\end{figure}

For the Wine dataset, we observe that both WDA-gd and WDA-nepv achieve the same optimal value, $f(\bbf{P}_*)$, while WDA-eig's optimal value, $f(\bbf{P}_*)$, is smaller for both $p=2$ and $p=3$, indicating that WDA-eig found a suboptimal solution. 
This highlights the suboptimality of WDA-eig's surrogate ratio trace model~\eqref{eq:WDA_rtopt}.

Regarding the Iris dataset, WDA-nepv obtains the largest optimal value, $f(\bbf{P}_*)$, among the three algorithms for all considered subspace dimensions.
Moreover, WDA-nepv demonstrates faster convergence, requiring significantly fewer iterations compared to WDA-gd.
}
\end{example}

Both Example~\ref{ex:conv2} and Example~\ref{ex:conv3} illustrate that convergence to non-global optima can be a prevalent issue for all three algorithms.
The convergence behavior varies across datasets, with instances where some or all three algorithms successfully converge to the same optimizer, as well as instances where they converge to different optimizers.
Notably, when the initial projection is chosen close to the global maximizer, it can lead to global convergence. 
However, since the global maximizer is typically unknown in advance, obtaining an initial projection close to it may require multiple attempts.

\paragraph{Different choices of initial projection.}~
While all the results in the paper utilize a random initial projection $P_0$, we acknowledge that alternative choices, such as initializing with the solution of PCA, are possible.
Notably, for low-dimensional datasets like Wine ($d=13$) and Iris ($d=4$), we observe no performance difference between random and PCA initialization.
However, for higher-dimensional datasets like LSVT ($d=309$) and Parkinson ($d=754$), the optimal objective value obtained from a PCA initialization is higher than that from a random initialization.

It is worth noting that despite the PCA initialization yielding a higher objective value, the random initialization achieves greater classification accuracy for these high-dimensional datasets.
This finding underscores the complexity and multi-faceted nature of the optimization problem, where an improved objective value does not always directly translate to better classification performance.

\subsection{WDA-nepv: Classification Accuracy}

One of the many goals of DR methods is to derive a projection such that the projected data vectors maintain or amplify the coherent structure of the original dataset. For a supervised linear DR method, one hopes to obtain an optimal projection matrix $\bbf{P}$ such that the class structure of the original data vectors in the projected subspace is more pronounced. For this reason, the effectiveness of a supervised linear DR method is often measured by the accuracy of classification on the projected data vectors. Measuring the accuracy of NTRopt-WDA~\eqref{eq:WDA_tropt} is no exception. To evaluate the classification accuracy of an algorithm, we follow the following conventional steps:
\begin{enumerate}
    \item Randomly divide a given dataset into a training dataset and a testing dataset, with equal size of $50\%$ each.
    \item Compute the optimal projection matrix $\bbf{P}_*$ using the training dataset.
    \item Project the testing dataset onto the lower-dimensional subspace using the optimal projection matrix $\bbf{P}_*$.
    \item Employ the K-Nearest-Neighbors classifier (KNN) on the projected testing dataset to compute the classification accuracy.
\end{enumerate}
The classification accuracy is quantified in terms of prediction error. A smaller error indicates better performance. 

\begin{example}
{\rm
In this example, the synthetic shape datasets are used to evaluate the classification accuracy of WDA-nepv, WDA-gd, and WDA-eig. The projection dimension is fixed as $p=2$, the dimension size of the true discriminative subspace of the datasets. We consider KNN number $K=10$ with various regularization parameters $\lambda=\{0.1,1,5\}$. The experiment is repeated 100 times and the average classification errors is reported in Table~\ref{tab:WDA_Synth_Shape}.

\begin{table}[H]
\centering
\begin{tabular}{llllllll}
$\lambda$     & Alg               & Jain           & Flame          & Pathbased      & Compound       & Aggregation    & R15            \\ \hline
$\lambda=0.1$ & WDA-gd            & \textbf{0.042}          & 0.101          & \textbf{0.106}          & \textbf{0.089}          & \textbf{0.003}          & \textbf{0.005}          \\
              & WDA-eig           & 0.062          & \textbf{0.050}          & 0.126          & 0.093          & \textbf{0.003}          & \textbf{0.005}          \\
              & WDA-nepv & \textbf{0.042} & 0.128 & 0.148 & 0.092 & \textbf{0.003} & \textbf{0.005} \\ \hline
$\lambda=1$   & WDA-gd            & 0.059          & 0.112          & 0.092          & 0.079          & \textbf{0.003}          & \textbf{0.004}          \\
              & WDA-eig           & 0.061          & \textbf{0.076}          & \textbf{0.073}          & 0.080          & \textbf{0.003}          & \textbf{0.004}          \\
              & WDA-nepv & \textbf{0.021} & 0.081 & 0.079 & \textbf{0.078} & \textbf{0.003} & \textbf{0.004} \\ \hline
$\lambda=5$   & WDA-gd            & 0.068          & 0.142          & 0.220          & 0.232          & 0.006          & 0.009          \\
              & WDA-eig           & 0.053          & \textbf{0.088}          & \textbf{0.101}          & \textbf{0.073}          & \textbf{0.003}          & \textbf{0.004}          \\
              & WDA-nepv & \textbf{0.046} & 0.118 & 0.159 & 0.074 & \textbf{0.003} & \textbf{0.004}
\end{tabular}
\caption{Prediction errors of WDA-gd, WDA-eig, WDA-nepv for Jain, Flame, Pathbased, Compound, Aggregation, R15 dataset.}
\label{tab:WDA_Synth_Shape}
\end{table}

For Aggregation and R15, WDA-nepv achieves similar high accuracy as WDA-gd and WDA-eig. For Compound, WDA-nepv has smallest prediction error for $\lambda=1$ and while it does not have the smallest prediction errors for $\lambda=0.1$ and $\lambda=5$, less than $0.3\%$ accuracy difference exists between WDA-nepv and the best performing algorithm. For Pathbased, WDA-nepv does not have the smallest prediction errors. However, when $\lambda=1$, for which all three algorithms have the smallest prediction errors among the three regularization parameters indicating that $\lambda=1$ is optimal for Pathbased, WDA-nepv has only $0.6\%$ difference with the best performing algorithm. For Flame, WDA-nepv does not have the smallest prediction errors but it performs better than WDA-gd for $\lambda=1$ and $\lambda=5$. For Jain dataset, WDA-nepv has the smallest prediction errors for all regularization parameters, achieving around $2\%$ to $4\%$ higher accuracy.
}
\end{example}

\begin{example}{\rm 
Using the UCI datasets Wine, Ionosphere, LSVT, and Parkinson's Disease, the classification accuracy of WDA-nepv is compared with the classification accuracy of WDA-gd and WDA-eig. As suggested in WDA-eig \cite{liu2020ratio}, we add a small perturbation term $\epsilon I_d$ with $\epsilon = 1$ to the matrix $\bbf{C}_w(\bbf{P}_k)$. This enforces the positive definiteness of $\bbf{C}_w(\bbf{P}_k)$ in practice and helps the robustness of the computation of the eigenvalue problems.
The regularization parameter $\lambda=0.01$ is fixed throughout and the stopping tolerance parameter is set at $10^{-5}$. Each experiment is repeated 20 times, and the average and the min-max interval of classification errors are reported.
Two different experiments are considered; one to observe the algorithm behavior in KNN number and another in subspace dimension.
\begin{itemize}
    \item With fixed subspace dimension $p=5$, various KNN numbers $K\in\{1,3,5,7,9,11,13,15,17,19\}$ are considered; see left column of Figure~\ref{fig:WDA_UCI_Class}.
    
    \item With fixed KNN number $K=11$, various subspace dimensions $p\in\{1,2,3,4,5\}$ are considered; see right column of Figure~\ref{fig:WDA_UCI_Class}.
\end{itemize}


\begin{figure}[]
\begin{center}
\includegraphics[width=0.8\textwidth]{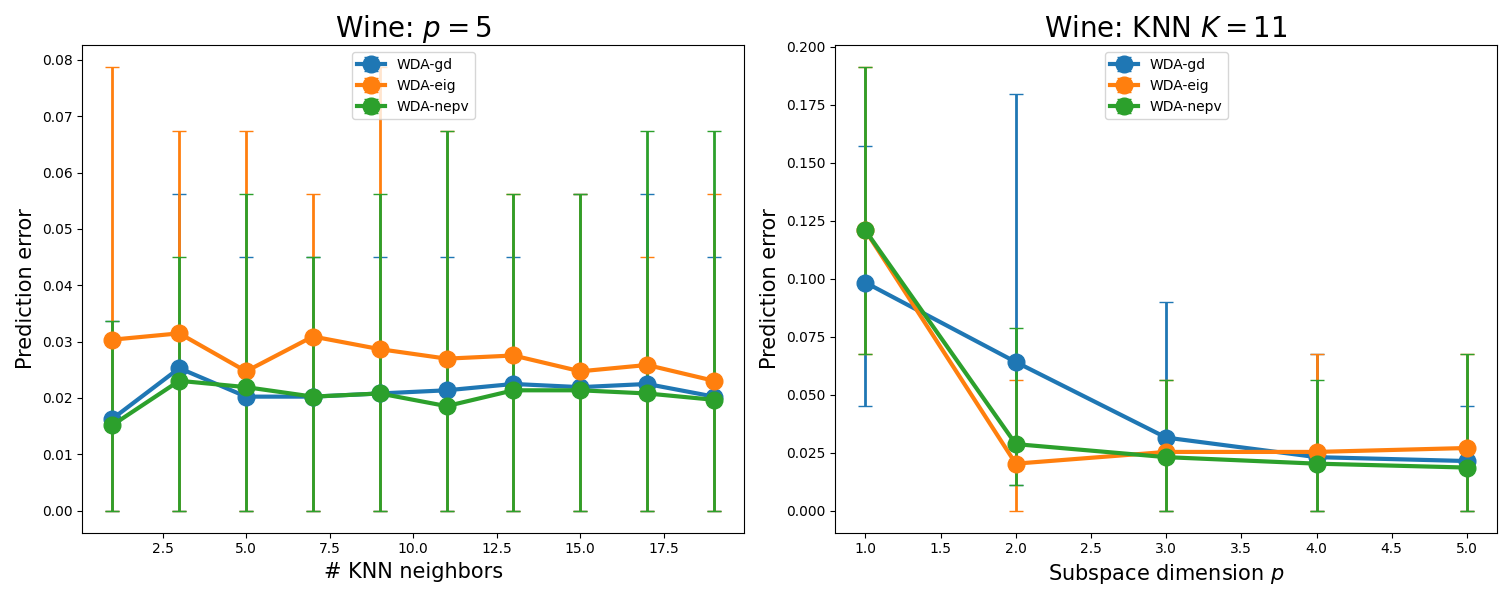}
\includegraphics[width=0.8\textwidth]{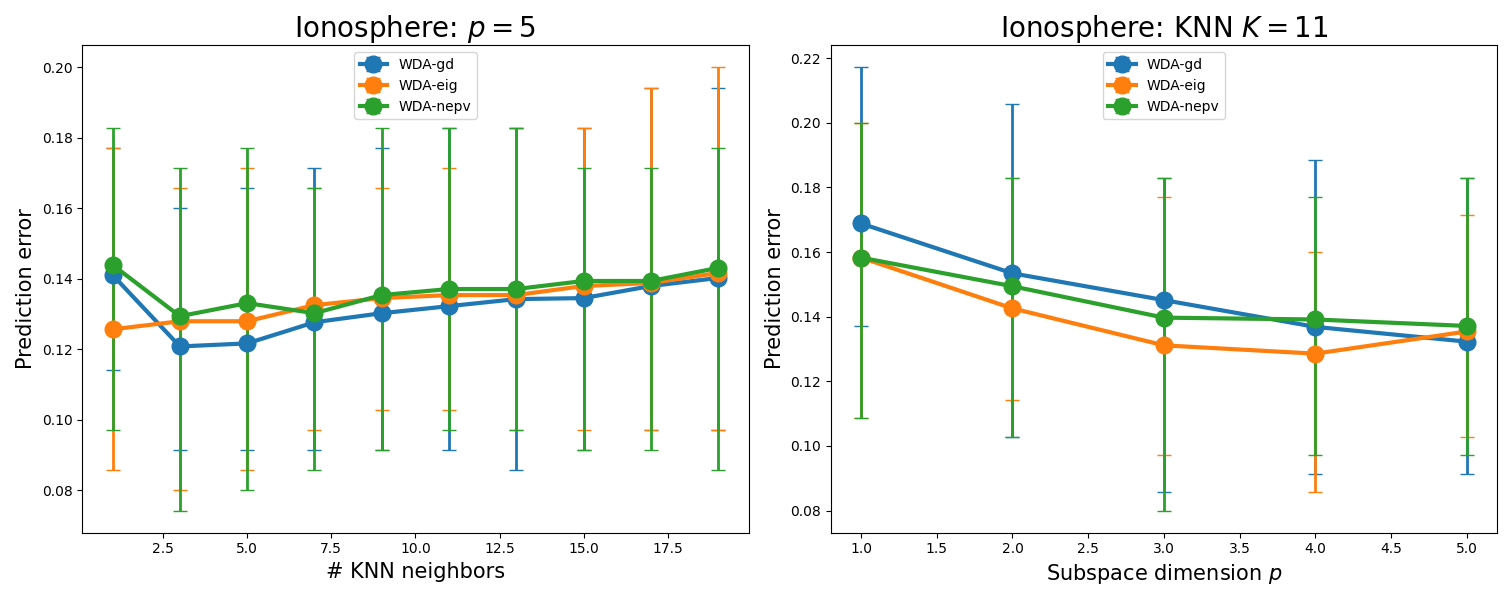}

\includegraphics[width=0.8\textwidth]{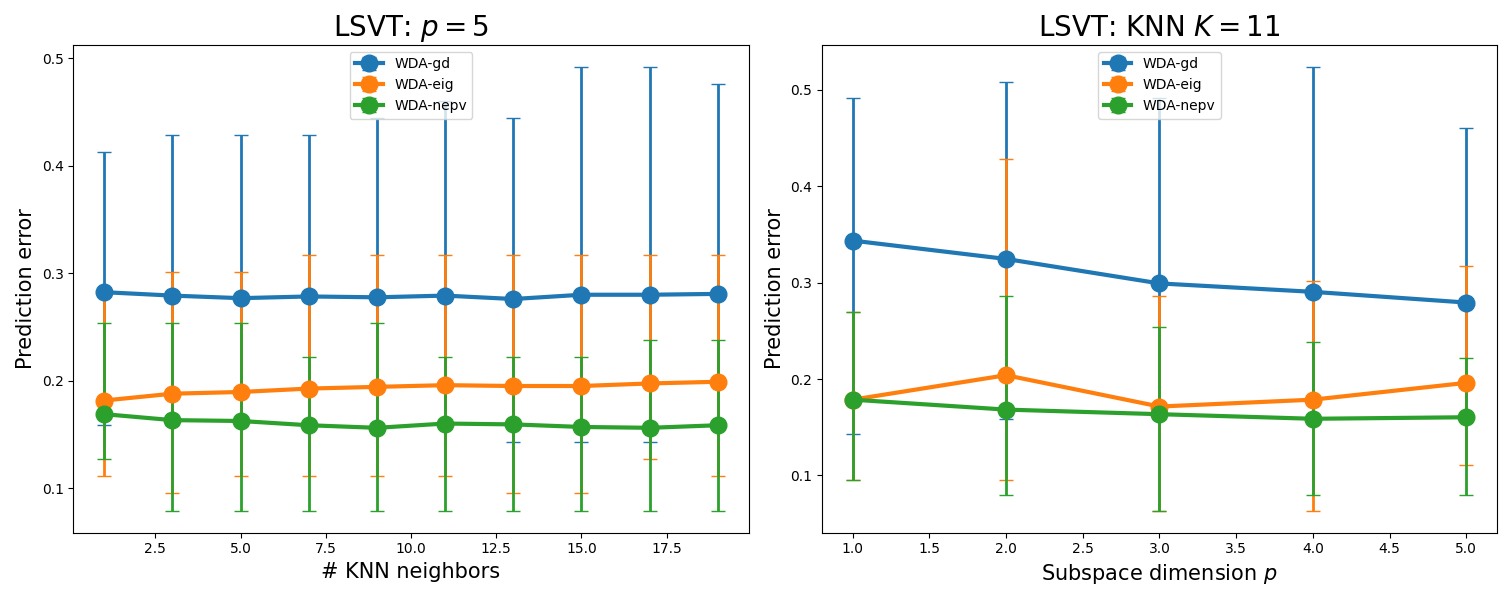}
\includegraphics[width=0.8\textwidth]{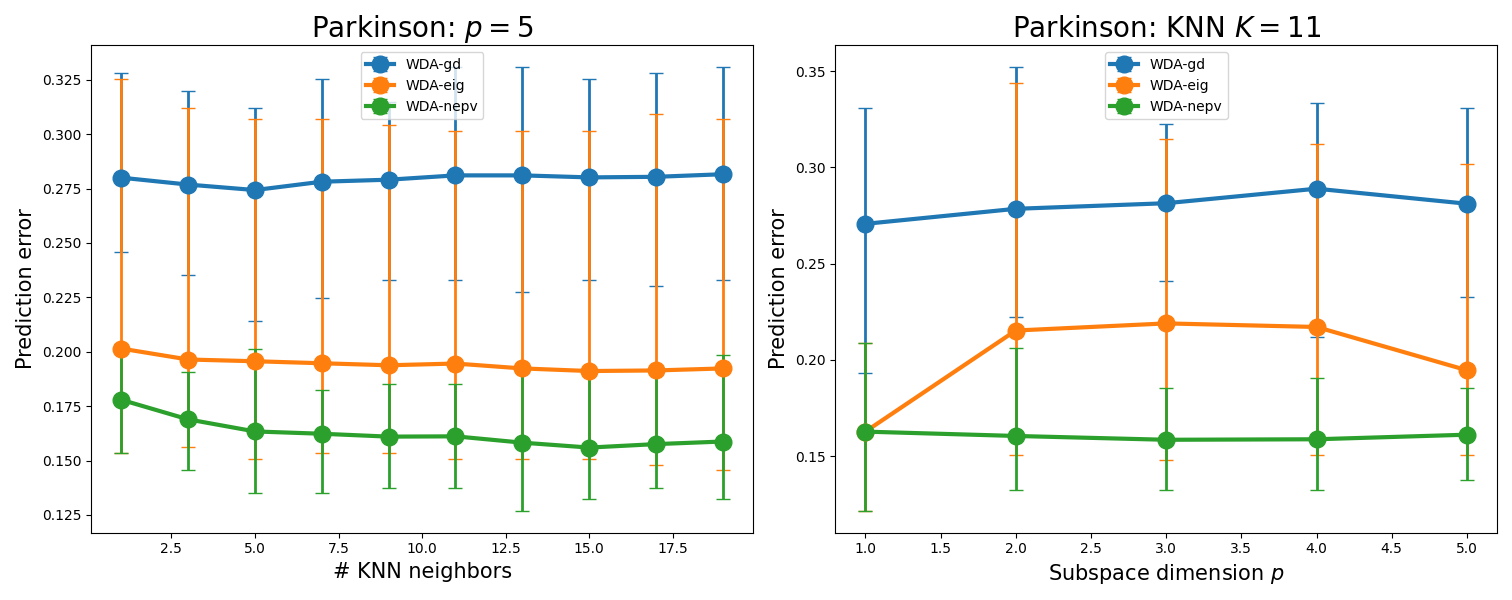}
\end{center}
\caption{Prediction errors for WDA-gd, WDA-eig, and WDA-nepv.}
\label{fig:WDA_UCI_Class}
\end{figure}


For Wine and Ionosphere, we observe that WDA-nepv performs comparable to WDA-gd and WDA-eig, achieving similar accuracy as the other two WDA algorithms - when looking at the averages, less than 1\% classification accuracy difference exists between WDA-nepv and the best performing algorithm for all considered values of KNN numbers and subspace dimensions.

For higher dimensional datasets LSVT and Parkinson, WDA-nepv achieves lowest average prediction errors and has the smallest min-max error intervals among the three algorithms. For LSVT, WDA-nepv achieves, on average, at least 2\% to 4\% higher classification accuracy for considered KNN numbers, and at least 1\% to 4\% higher classification accuracy for considered subspace dimensions. For Parkinson, WDA-nepv achieves, on average, at least 2\% to 4\% higher classification accuracy for considered KNN numbers, and at least 3\% to 5\% higher classification accuracy for considered subspace dimensions.

}
\end{example}

\subsection{WDA-nepv: Timing and Scalability}

We demonstrate the efficiency of WDA-nepv by reporting its running time and its scalability. 

\begin{example}\label{eg:time}
{\rm
We compare the running time of WDA-gd, WDA-eig, and WDA-nepv on the UCI datasets Wine, Ionosphere, LSVT, and Parkinson. For the fixed regularization parameter $\lambda=0.01$, we measure the running time for different subspace dimensions $p\in\{3,4,5\}$ for Wine and Ionosphere, and $p\in\{15,20,25\}$ for larger dimensional datasets LSVT and Parkinson. We repeat the experiment 20 times and report the average running times are shown in the following table:

\begin{table}[H]
\centering
\subfloat[Wine]
{\resizebox{0.45\linewidth}{!}{\begin{tabular}{|l|l|l|l|}
\hline
         & $p=3$          & $p=4$          & $p=5$          \\ \hline
WDA-gd   & 1.854          & 2.354          & 2.078          \\ \hline
WDA-eig  & \textbf{0.019} & \textbf{0.018} & \textbf{0.017} \\ \hline
WDA-nepv & 0.031          & 0.028          & 0.028       \\ \hline  
\end{tabular}}}
\quad
\subfloat[Ionosphere]
{\resizebox{0.47\linewidth}{!}{\begin{tabular}{|l|l|l|l|}
\hline
         & $p=3$          & $p=4$          & $p=5$          \\ \hline
WDA-gd   & 12.234          & 12.530          & 11.896          \\ \hline
WDA-eig  & 0.868 & 0.902 & 0.891 \\ \hline
WDA-nepv & \textbf{0.242}          & \textbf{0.241}       & \textbf{0.284}         \\ \hline
\end{tabular}}}
\\
\subfloat[LSVT]
{\resizebox{0.45\linewidth}{!}{\begin{tabular}{|l|l|l|l|}
\hline
         & $p=15$         & $p=20$         & $p=25$         \\ \hline
WDA-gd   & 11.332         & 12.965         & 14.106         \\ \hline
WDA-eig  & 13.689         & 14.007         & 14.153         \\ \hline
WDA-nepv & \textbf{4.683} & \textbf{4.811} & \textbf{5.068} \\ \hline
\end{tabular}}}
\quad
\subfloat[Parkinson]
{\resizebox{0.48\linewidth}{!}{\begin{tabular}{|l|l|l|l|}
\hline
         & $p=15$         & $p=20$         & $p=25$         \\ \hline
WDA-gd   & 1060.76         & 1000.56         & 1001.29         \\ \hline
WDA-eig  & 258.33         & 285.68         & 281.66         \\ \hline
WDA-nepv & \textbf{180.32} & \textbf{189.82} & \textbf{205.89} \\ \hline
\end{tabular}}}
\label{tab:time}
\end{table}

For a small dimensional Wine, we observe that while WDA-eig has the shortest running time, the running time of WDA-nepv only differs from WDA-eig by around 0.01 seconds. For larger dimensional Ionosphere, LSVT, and Parkinon, however, WDA-nepv achieves a running time that is more than half of the running time of WDA-gd and WDA-eig.
Unlike WDA-gd, which incurs heavy computational costs from computing the derivatives, WDA-nepv is derivative-free and has a short running time. Additionally, the efficient use of level-3 BLAS for the cross-covariance matrices gives WDA-nepv an edge over WDA-eig in terms of running time.

}
\end{example}

\begin{example}{\rm 
We demonstrate that WDA-nepv scales linearly in subspace dimension $p$ 
and quadratically in data dimension $d$, and number of data points $n$ using the Synthetic dataset.
The regularization parameters $\lambda=0.1,0.01,0.001$ are considered, and the reported scalability results are averaged over 100 trials.
Three experiments are performed:
\begin{itemize}
    \item With dimension $d=10$ and number of total data points $n=100$ with $(n_1,n_2,n_3)=(30,40,30)$, various subspace dimensions $p\in\{1,2,3,4,5\}$ are considered.
    \item With subspace dimension $p=2$ and number of total data points $n=100$ with $(n_1,n_2,n_3)=(30,40,30)$, various dimensions $d\in\{80,160,320,640,1280,2560\}$ are considered.
    \item With dimension $d=50$ and subspace dimension $p=2$, various number of total data points $n\in\{100,200,300,500,1000\}$ are considered.
\end{itemize}

\begin{figure}[h]
    \centering
    \includegraphics[width=0.32\linewidth]{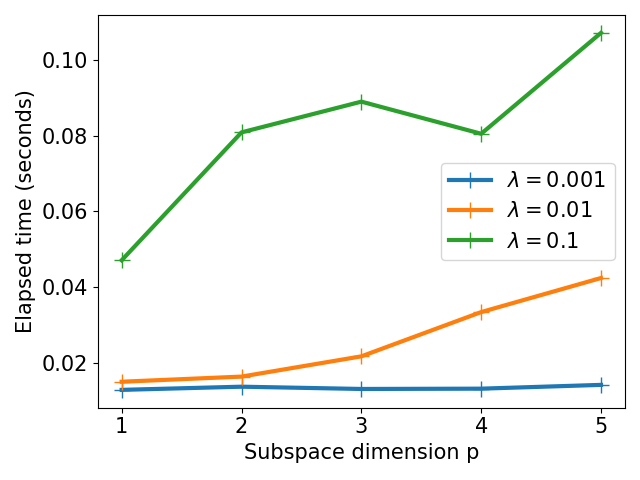}
    \includegraphics[width=0.32\linewidth]{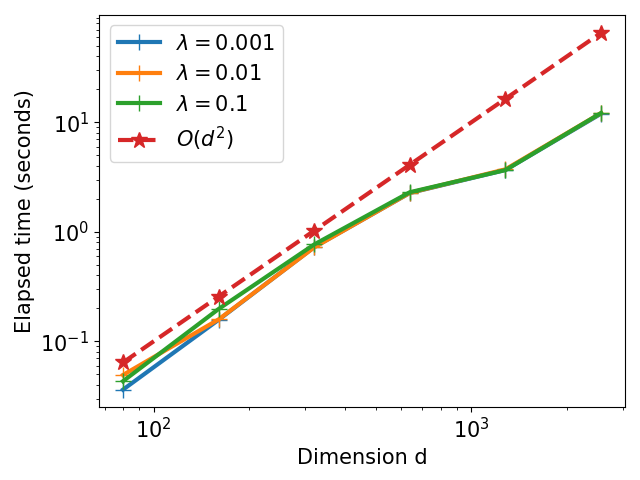}
    \includegraphics[width=0.32\linewidth]{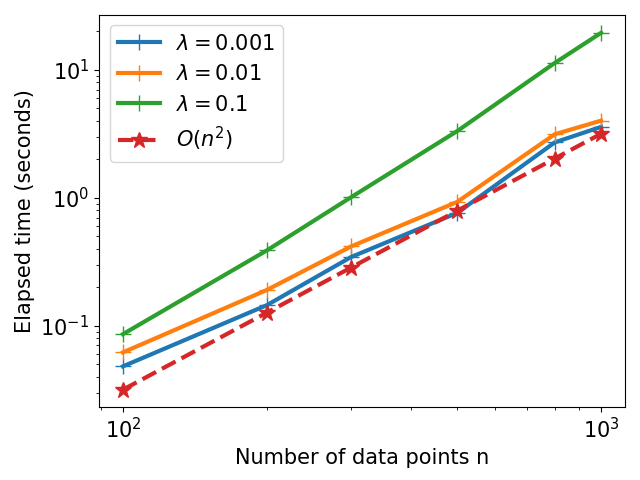}
    \caption{Scalability of WDA-nepv on the synthetic dataset: varying subspace dimension $p$ (left); varying dimension $d$ (middle); varying number of data points $n$ (right).}
    \label{fig:WDA_SYNTH_SCAL}
\end{figure}

Figure~\ref{fig:WDA_SYNTH_SCAL} illustrates that larger $\lambda$ incurs more running time.
As discussed previously, as $\lambda$ approaches zero, WDA-nepv becomes increasingly similar to LDA, which has a local quadratic convergence.
Therefore, convergence speed is generally faster for small values of $\lambda$. 
The left plot indicates that the running time of WDA-nepv scales linearly in subspace dimension $p$.
This linear scalability in $p$ is due to the matrix-vector multiplications involved in the exponential Euclidean distance matrices $\bbf{K}$ and the number of dominant eigenvectors in TRopts~\eqref{eq:TR} computation.
The scalability in data dimension $d$ and number of data points $n$ is displayed in the center and right log-log plots, respectively, in order to exemplify their quadratic scalability.
Moreover, a line of quadratic function is included in the log-log plots.
We observe that the slope of the lines corresponding to the running time of WDA-nepv match closely with the line of a quadratic function, indicating that the running time of WDA-nepv scales quadratically in data dimension $d$ and number of data points $n$. These quadratic scalability are due to the level-3 BLAS in the evaluation of the cross-covariance matrices $\bbf{C}_b(\bbf{P})$ and $\bbf{C}_w(\bbf{P})$~\eqref{eq:CbCwBlas3}.
}
\end{example}


\section{Conclusion and Future Works}\label{sec:conclusion}

\paragraph{Conclusion.}
We presented WDA-nepv, a new algorithm for Wasserstein discriminant analysis. WDA-nepv is a bi-level nonlinear eigenvector algorithm that fully utilizes the bi-level structure of WDA.
It employs an NEPv to compute the OT matrices at the inner optimization and another NEPv for the trace-ratio optimization at the outer iteration.
Both NEPvs can be solved efficiently by the SCF iteration. Unlike the existing algorithms WDA-gd and WDA-eig, WDA-nepv is derivative-free and surrogate-model-free. Specifically, in contrast to WDA-gd, WDA-nepv has no costs of computing the derivatives, which leads to lower running time. In contrast to WDA-eig, WDA-nepv solves the original problem directly, which results in higher accuracy. Additionally, we proposed an efficient level-3 BLAS implementation for computing cross-covariance matrices. We provided a convergence analysis of our algorithm, justifying the utilization of the SCF iteration for solving NEPvs. Numerical experiments demonstrate that WDA-nepv attains competitive classification accuracy and is scalable.

\paragraph{Future works.}
A main advantage of WDA is the ability to control global and local relations of the data points by varying the regularization parameter $\lambda$. However, there is no set technique to determine an optimal $\lambda_*$. Our numerical experiments indicate that $\lambda$ in the range from $0.01$ to $5$ generally works well for the classification task, but the optimal value varies across datasets. A systemic way of determining an optimal $\lambda$ is a subject of further study.

Our approach in solving NTRopt-WDA~\eqref{eq:WDA_tropt} was by fixating the dependence of the matrices on the projection and solving TRopts iteratively. Another approach that is worth further investigation is considering the first order necessary condition. Similar to the results of TRopt \cite{wang2007trace, zhang2010fast}, we can show that a local optimizer of NTRopt-WDA also satisfies a NEPv. This NEPv can then be solved by the SCF iteration. Our preliminary results of this approach on NTRopt-WDA indicate that it can outperform WDA-nepv in classification experiments. The major caveat of this approach, however, is its large computational costs in computing the derivatives. Improvement and further study of this approach will be left as our future work.

\bibliographystyle{plain}
\bibliography{refs} 

\end{document}